\documentclass{article}

\usepackage{iclr2027_conference,times}

\usepackage[T1]{fontenc}
\usepackage[utf8]{inputenc}
\usepackage{microtype}

\usepackage{amsmath,amssymb,amsthm,mathtools}
\usepackage{bm}

\newcommand{\secref}[1]{\S\ref{#1}}

\newcommand{\appref}[1]{Appendix~\ref{#1}}

\usepackage{booktabs}
\usepackage{graphicx}
\usepackage{subcaption}
\usepackage{xcolor}
\usepackage{listings}
\usepackage{threeparttable}
\usepackage{multirow}
\usepackage[scaled=0.95]{inconsolata}
\usepackage{caption}
\usepackage{placeins}

\lstdefinestyle{introcode}{
    language=Python,
    basicstyle=\ttfamily\scriptsize,
    columns=fullflexible,
    keepspaces=true,
    showstringspaces=false,
    numbers=left,
    numberstyle=\tiny,
    numbersep=6pt,
    frame=single,
    framerule=0.35pt,
    moredelim=**[is][\bfseries]{@}{@}
}

\usepackage{wrapfig}

\usepackage[normalem]{ulem}

\usepackage{titletoc}
\usepackage[hidelinks]{hyperref}
\usepackage[nameinlink,noabbrev]{cleveref}

\crefname{figure}{Fig.}{Figs.}
\Crefname{figure}{Fig.}{Figs.}

\crefname{table}{Tab.}{Tabs.}
\Crefname{table}{Tab.}{Tabs.}

\crefname{section}{\S}{\S\S}
\Crefname{section}{\S}{\S\S}

\crefname{equation}{eq.}{eqs.}
\Crefname{equation}{Eq.}{Eqs.}

\crefname{proposition}{Prop.}{Props.}
\Crefname{proposition}{Prop.}{Props.}

\crefname{theorem}{Thm.}{Thms.}
\Crefname{theorem}{Thm.}{Thms.}

\crefname{corollary}{Cor.}{Cors.}
\Crefname{corollary}{Cor.}{Cors.}

\usepackage{enumitem}

\newcommand{\E}{\mathbb{E}}
\newcommand{\Var}{\operatorname{Var}}

\newcommand{\Pois}{\operatorname{Pois}}

\newtheorem{theorem}{Theorem}[section]
\newtheorem{proposition}[theorem]{Proposition}
\newtheorem{lemma}[theorem]{Lemma}
\newtheorem{corollary}[theorem]{Corollary}
\theoremstyle{definition}

\theoremstyle{remark}

\title{Extending Pathwise Gradients to Discrete Random Variables via Finite-Order Relaxation}
\author{Donghan He \\
\texttt{donghanhe@gmail.com}
\And
Luhuan Wu \\
Department of Applied Mathematics and Statistics,\\
Johns Hopkins University\\
\texttt{lwu86@jh.edu}
}

\iclrfinalcopy

\begin{document}

\maketitle
\lhead{Preprint}

\begin{abstract}
Pathwise gradients are preferred for continuous random variables because they are unbiased, low variance, and work with a single sample.
For discrete variables, 
pathwise gradients cannot generally be exact for every differentiable function.
We propose a general framework for constructing
\textbf{Fini}te-order exact pathwise gradient estimators
(Fini) for discrete variables, which are unbiased for all polynomials up to degree $D$.  
Among all admissible solutions, our estimator is the unique least-norm solution and has minimum weight variance. 
For common discrete distributions such as the Poisson, our construction yields native, closed-form  estimators tailored to each distribution, 
implementable in a few lines of code.
They preserve exact discrete forward samples without continuous relaxation, temperature tuning, or augmented representations. 
At matched polynomial exactness, they avoid the excess weight variance of non-native representations.  
We also bound bias for smooth downstream functions, showing how accuracy beyond polynomials depends on the first unmatched derivative and the sampling distribution.
In experiments our low-order estimators match or outperform tuned baselines across linear, nonlinear and hierarchical latent-variable models, while out-speeding competitors in every runtime benchmark.
\end{abstract}

\section{introduction}
For Monte Carlo (MC) gradient estimation in continuous random variables, the pathwise gradient estimator is the recommended default for its unbiasedness, typically low variance, and implementation simplicity \citep{mcge_survey}. 
Pathwise gradients measure sensitivity among sample paths, combining how a realized sample changes with the distribution's parameters with how the downstream function changes with the sample.
Thus it requires differentiability of both the sample path and the downstream function.
However, discrete variables, commonly used in neural sequence models and deep generative models \citep{def,pvae,second_gsm_poglm}, do not generally admit a pathwise estimator that is unbiased for all differentiable functions. 
This leaves a gap for MC gradient estimation in discrete distributions. 

Existing gradient estimators for discrete variables navigate a bias--variance--computation tradeoff. 
Score-function estimators are unbiased but often have high variance; reducing it may require more samples or carefully designed control variates, adding computational or implementation cost  \citep{bbvi, mcge_survey}.
Measure-valued derivatives (MVD) and discrete General One-sample gradients can reduce variance through additional downstream evaluations, which become expensive in high-dimensional models \citep{mvd,mvd_vi,go}.
Biased alternatives instead approximate the backward pass, trading exactness for inexpensive gradients.
Straight-through (ST) estimators preserve the hard sample but substitute a surrogate backward derivative, introducing bias \citep{ste}.
For categorical variables, the Gumbel--Softmax trick \citep{gsm,concrete} instead replaces the hard sample with a differentiable continuous approximation controlled by temperature, allowing gradients through the relaxed sample. 
Lowering the temperature improves bias but can increase gradient variance, 
making it an additional hyperparameter to tune.
Beyond categorical variables, researchers use the Gumbel--Softmax trick to approximate gradients for discrete variable such as Poisson \citep{second_gsm_poglm}; 
however, these methods need to approximate one-dimensional variables with $K$-dimensional categorical variables, and inherit the temperature parameter, both adding computational overhead. 

Recently, \cite{reinmax} 
interpret categorical ST estimators as a first-order numerical method and propose a second-order ST method, ReinMax. 
ReinMax is unbiased for degree-2 polynomials, whereas first-order ST is unbiased for linear functions.
This improvement enables ReinMax to outmatch both biased and unbiased methods consistently on various benchmarks. 
Nonetheless, applying ReinMax to other discrete variables again requires approximation, which can lead to statistical and computational overheads.
Can we adapt pathwise gradients to discrete variables beyond categoricals
by relaxing the unbiasedness requirement in a principled way? 

\begin{figure}[!t]
\centering
\begin{minipage}[t]{0.25\linewidth}
    \vspace{0pt}
    \centering
    \includegraphics[width=0.99\linewidth]{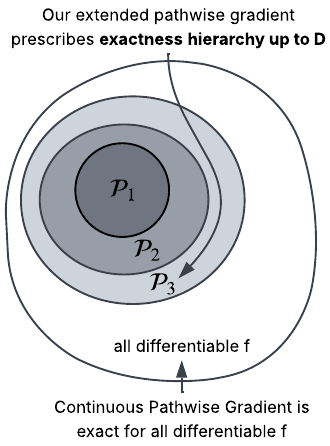}
\end{minipage}
\hfill
\begin{minipage}[t]{0.33\linewidth}
    \vspace{0pt}
    \centering
    \includegraphics[width=0.99\linewidth]{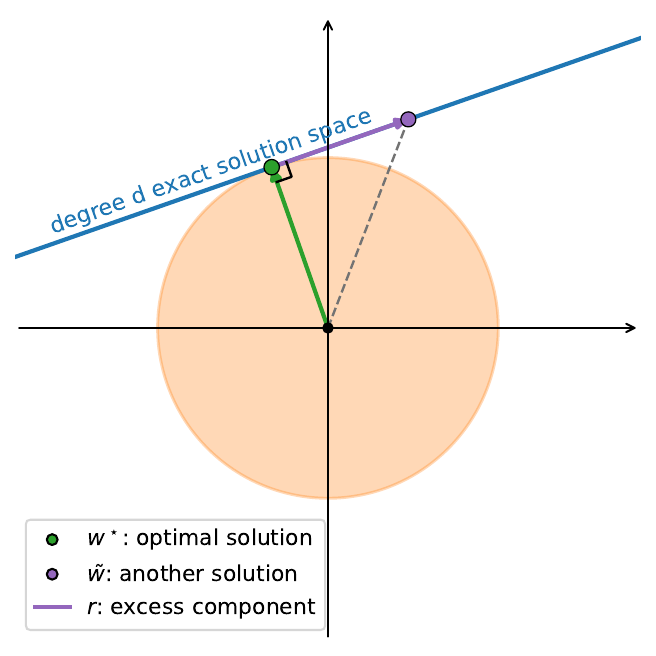}
\end{minipage}
\hfill
\begin{minipage}[t]{0.32\linewidth}
    \vspace{0pt}
    \captionsetup[lstlisting]{font=footnotesize,skip=2pt}
    \begin{lstlisting}[
        style=introcode,
        basicstyle=\ttfamily\fontsize{5.6}{6.2}\selectfont,
        numberstyle=\fontsize{4.8}{5.2}\selectfont,
        numbersep=3pt,
        xleftmargin=0.1em,
        xrightmargin=0.1em,
        caption={Poisson Fini-2 and Fini-3 in torch.},
        label={lst:intro-poisson},
        captionpos=t
    ]
...
rate = eta.exp()
z_hard = Poisson(rate).sample()

# order 2:
@w = 0.5 * (z_hard + rate)@
# order 3:
# w = (5*rate/12
#   + (8*rate+1)*z_hard/(12*rate)
#   - z_hard**2/(12*rate))

@z = z_hard + w.detach() * (eta - eta.detach())@
...    \end{lstlisting}
\end{minipage}
\vspace{0.1em}
\caption{
\textbf{Fini}te-order pathwise gradients. \textbf{Left}: pathwise gradients are unbiased for all differentiable functions - this is too rigid for discrete variables, so we relax the unbiasedness requirement to low-order polynomials; \textbf{Center}: generally, there are many solutions to the finite-order unbiasedness requirement; we use the least-norm principle to select the order-D unbiased solution, avoiding excess weight variance;
\textbf{Right}: as an example, Fini-2 and 3 for Poisson reduce to a few lines of code. 
}
\label{fig:intro-construction}
\end{figure}

We introduce a general recipe, Fini, that applies to a range of discrete variables:
As shown in \Cref{fig:intro-construction}, we require the pathwise estimator to be unbiased for low-degree polynomials rather than for every differentiable function; this weaker requirement admits many solutions, so we select the least-norm solution.
This leads to several mathematical, statistical, and computational benefits, 
shown below.  


\paragraph{Contributions:}
\begin{itemize}[leftmargin=*, itemsep=1pt, topsep=2pt]

\item
\textbf{Fini: Finite-order exact discrete pathwise gradients:}
We introduce Fini, a general recipe that extends pathwise gradients to discrete random variables by requiring the gradient to be unbiased for polynomials up to degree $D$ and selecting the least-norm solution.
(\secref{sec:method}).

\item
\textbf{Generality across discrete variables:}
For common discrete variables, the estimator is solved once at design time. The estimator is simple to implement (see the Poisson example in \Cref{lst:intro-poisson}) and evaluation is fast and independent of support size
(\secref{sec:method}, \appref{app:estimators},
\secref{sec:two-parameter-st}).

\item
\textbf{Bias and variance analysis:}
Our solution uniquely minimizes weight variance within the exactness class; we derive a non-asymptotic bias bound beyond it.
We also characterize excess variance introduced by categorical and compound representations (\secref{sec:theory}).

\item
\textbf{Better empirical performance:} 
Across neural sequence models, VAEs, and hierarchical topic models, Fini matches or outperforms tuned specialized and generalist baselines 
(\secref{sec:experiments}).

\end{itemize}

\section{Background}
\label{sec:background}

\subsection{Monte Carlo Gradient Estimation and Score Function Gradients}
\label{sec:mc-gradient-background}

For a probability distribution $p_{\theta}(Z)$, we write the associated expectation $\E_{p_{\theta}}$ as $\E_{\theta}$ for simplicity. 
For an objective $\E_{\theta}[f(Z)]$,
gradient-based learning requires calculating $\partial_\theta \E_{\theta}[f(Z)]$, often from MC samples. 
This problem emerges in supervised,
unsupervised, and reinforcement learning. 

One way to estimate $\partial_\theta \E_{\theta}[f(Z)]$ is through the score-function gradient, also known as REINFORCE \citep{reinforce,bbvi}. They use the identity
$
\partial_\theta\E_{\theta}[f(Z)]
=
\E_{p_\theta}\!\left[f(Z)s_\theta(Z)\right]
$, with $s_\theta(Z)=\partial_\theta\log p_\theta(Z)$.
Computing the integral analytically or numerically is generally intractable in high dimensions;
MC gradient estimation instead draws $N$ samples $z_n \sim p_\theta$ and forms unbiased estimators of the quantity using $\sum_{n=1}^N f(z_n) s_{\theta}(z_n)/N$.
The score-function estimator applies to both continuous and discrete variables and is unbiased under suitable regularity conditions.
The assumptions are quite general, but the estimator can have high variance 
\citep{mcge_survey}.

\subsection{Pathwise Gradients}
\label{sec:pathwise-background}



When $f$ is differentiable, pathwise gradients instead use derivative
information and integrate directly with ordinary backpropagation.
For a continuous family, a pathwise derivative weight $w_\theta$ satisfies
\begin{equation}
\partial_\theta\E_\theta[f(Z)]
=
\E_\theta\!\left[w_\theta(Z)f'(Z)\right]
\qquad
\text{for every differentiable }f.
\label{eq:continuous-pathwise-identity}
\end{equation}
Explicit reparameterization writes out the sampling path
$Z=g(\epsilon, \theta)$ for parameter-free noise $\epsilon$ and
differentiates the path so $w_\theta = \partial_{\theta} g(\epsilon, \theta)$ 
\citep{vae,stochastic_backprop}.
Consider a Gaussian $\mathcal{N}(\mu, \sigma)$, one valid path is $g(\epsilon, \theta=(\mu, \sigma)) = \mu+\sigma \epsilon$ where $\epsilon \sim \mathcal{N}(0, 1)$;
then one can use $f'(z) \partial_{\theta}g(\epsilon, \theta)$ as a valid one-sample estimator for $\partial_\theta L$.
For notation simplicity, we note $w_\theta$ as $w$ going forward; it generally depends on the distribution parameters.


The pathwise estimator is the recommended default for continuous MC gradient estimation because it is unbiased, typically low variance, and works well with one sample.
For common discrete variables, one can show no such $w$ can satisfy \Cref{eq:continuous-pathwise-identity} for all differentiable $f$. 
See proof in \Cref{app:bump-proof} for details.
This limitation motivates researchers to use estimators that introduce bias, high variance, or additional computations.
\section{Fini: \textbf{Fini}te-Order Pathwise Gradients}
\label{sec:method}
To extend pathwise gradients to discrete distributions, we relax the unbiasedness 
requirement in \Cref{eq:continuous-pathwise-identity}.
For a candidate pathwise weight $w$, instead of enforcing the unbiased pathwise identity for any differentiable $f$, we relax this requirement to unbiasedness with respect to polynomials up to degree $D$, denoted as $\mathcal{P}_D$.  
Since expectation is linear and the monomial basis spans the vector space of polynomials, unbiasedness over $\mathcal P_{D}$ breaks down to unbiasedness against $\{x, x^2,\cdots, x^{D}\}$.
Thus, with the pathwise form $w_D(Z)f'(Z)$ and a prescribed order $D$, the relaxed requirement becomes
\begin{equation}
\E_\theta[w_D(Z)f'(Z)]
=
\partial_\theta\E_\theta[f(Z)]
\quad
\text{for every }f\in \{Z^d\}_{d=1}^D
\label{eq:finite-order-pathwise}
\end{equation}
Given we want to find a solution $w^\star$ in the functional space, we need to be able to describe the space first, so we equip the space with the inner-product $\langle u,v\rangle_\theta=\E_\theta[u(Z)v(Z)]$. 
We have now arrived at a Hilbert space  $L^2(p_{\theta})$ given the base law $p_{\theta}$, and we see that for low orders, constraints in \Cref{eq:finite-order-pathwise} generally leave $w$ underdetermined.
We choose the least-norm rule to resolve the ambiguity, which leads to clean derivations and computationally cheap solutions as shown later in this section, and other desirable statistical properties shown in \Cref{sec:theory}. 
Assuming suitable integrability and differentiability, the Hilbert space optimization problem can be written as follows:

\begin{equation}
\begin{aligned}
&
\operatorname*{arg\,min}_{w_D\in L^2(p_\theta)}
\|w_D\|_2
\\
\text{s.t.}\qquad
\langle w_D,Z^{d-1}\rangle_\theta
&=
\frac1d\partial_\theta\E_\theta[Z^d],
\qquad
d=1,\ldots,D.
\end{aligned}
\label{eq:min-norm-moment-problem}
\end{equation}

The key to solving this problem is to invoke the Riesz representation theorem, and note that the monomial basis up to $D-1$ are precisely the Riesz representers to our problem.
Therefore, the solution is a linear combination of monomials $v_D(Z) = [1,..., Z^{D-1}]$,
which means the resulting weight has degree at most $D-1$.
Given the solution $w_D^\star$ must be a linear combination of monomials, each of the LHS in our $D$ constraints becomes a gram matrix $G_D=\E_\theta[v_D(Z)v_D(Z)^\top]$
multiplied by the linear combination's coefficients.
As long as the gram matrix is non-singular, we can safely invert it.
Collecting the RHS of constraints in problem (\ref{eq:min-norm-moment-problem}) as $b_D$, we see that the solution is: 
\begin{equation}
\boxed{
w_D^\star(z)
=
v_D(z)^\top G_D^{-1}b_D.
}
\label{eq:gram-solution}
\end{equation}

We refer to solution $w_D^\star$ as the Fini-$D$ estimator. 
This is the minimum-norm/least-norm solution to underspecified problems \citep{boyd2004, trefethen1997numerical}.
The recipe is general: we compute moments up to $2D-2$ for the gram matrix, as well as the first $D$ moments' derivatives for $b_D$, then solve the $D \times D$ linear system symbolically once to obtain the closed-form solution in \Cref{eq:gram-solution}.
This requires finite moments through order $2D-2$, differentiability of the first $D$ moments with respect to the parameter, and nonsingular $G_D$.
The solutions are polynomials of degree at most $(D-1)$, evaluable in $O(D)$ time, independent of support size - another direct consequence of the least norm principle.

We show an example Poisson derivation below.
Furthermore, we show for continuous variables at order 2, Fini recovers classic pathwise gradients.
Going forward, we use exactness and unbiasedness interchangeably, and call an estimator order-D when it is unbiased for every polynomial of degree at most $D$.  


\paragraph{Poisson example:}
\label{sec:pois-example}
Let $Z\sim\operatorname{Pois}(\exp(\eta))$.
The order-3 Gram matrix and $b_3$ are
\begin{equation}
G_3=
\begin{pmatrix}
1 & \lambda & \lambda^2+\lambda \\
\lambda & \lambda^2+\lambda & \lambda^3+3\lambda^2+\lambda \\
\lambda^2+\lambda & \lambda^3+3\lambda^2+\lambda
& \lambda^4+6\lambda^3+7\lambda^2+\lambda
\end{pmatrix},
\qquad
b_3=
\begin{pmatrix}
\lambda\\
\lambda^2+\lambda/2\\
\lambda^3+2\lambda^2+\lambda/3
\end{pmatrix}.
\label{eq:poisson-gram}
\end{equation}
The order-2 system uses the leading $2\times2$ block of $G_3$ and the
first two entries of $b_3$.
Solving the two systems and substituting into
$w_D^\star(z)=v_D(z)^\top G_D^{-1}b_D$ yields
\begin{equation}
\boxed{
    w_2^\star(Z)=\frac{Z+\lambda}{2},
    \qquad
    w_3^\star(Z)=\frac{5\lambda}{12}
    +\frac{8\lambda+1}{12\lambda}Z
    -\frac{1}{12\lambda}Z^2.
}
\label{eq:poisson-weights}
\end{equation}
These are the Fini-2 and 3 weights used in \Cref{lst:intro-poisson}.
The construction extends to any prescribed order; \appref{app:estimators}
gives the Fini-4 weight, whose variance under the log-rate
parameterization diverges as $\lambda\downarrow0$.
This illustrates the variance cost of higher exactness.

\paragraph{Poisson MVD and quadrature.}
The same weights also follow from the difference identity
\[
\partial_\eta\E[f(Z)]
=
\lambda\E[f(Z+1)-f(Z)]
=
\lambda\E\!\left[\int_0^1 f'(Z+t)\,dt\right].
\]
Direct use of the difference identity requires an extra evaluation of
$f$ at $Z+1$.
For order 2, trapezoidal quadrature approximates the increment by
$(f'(Z)+f'(Z+1))/2$.
The Poisson shift identity
$\lambda\E[g(Z+1)]=\E[Zg(Z)]$
then replaces $f'(Z+1)$ by $(Z/\lambda)f'(Z)$ in expectation,
recovering $w_2^\star$.
The order-3 rule uses interpolation nodes at $t=0,1,2$;
\appref{app:mvd-quadrature} gives the algebra.

\paragraph{Exact Pathwise Recoveries and ST recoveries: }
\label{sec:pathwise-connections}
Fini-2 recovers fully exact pathwise gradients in the following cases:
for scalar location--scale families, $w_2^\star(Z)=\partial_\theta Z$;
for Gaussian random vectors, we recover the minimum-energy affine pathwise field.
Both estimators are exact for every sufficiently regular $f$, although we impose exactness only for quadratics.
Furthermore for categoricals, if we replace the min-norm rule with a symmetry principle, we recover ReinMax at temperature $=1$, and Fini-2 corresponds to hard one-hot MVE in \cite{hooperetal2026}. 
See \Cref{app:connections} for details. 

\section{Theory}
\label{sec:theory}
We establish uniqueness and weight-variance optimality, characterize two sources of excess variance, and bound bias outside the prescribed class.
Proofs and details are in \Cref{app:proofs} and \Cref{app:optimization}.
We start by showing our solutions are unique, and the solved weights are variance optimal.

\begin{proposition}
\label{prop:min-variance}
Let $V_D=\operatorname{span}\{1,Z,\ldots,Z^{D-1}\}$ and assume $G_D$ is
nonsingular. Every degree-$D$ exact weight $w\in L^2(p_\theta)$ satisfies
\begin{equation}
w=w_D^\star+r,\qquad r\perp V_D,\qquad
\Var_\theta[w]=\Var_\theta[w_D^\star]+\|r\|_2^2.
\label{eq:min-variance-gap}
\end{equation}
Thus $w_D^\star$ uniquely minimizes both norm and weight variance in the
exactness class.
\end{proposition}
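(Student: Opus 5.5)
The plan is to recast the exactness constraints as inner-product conditions and then use the orthogonal decomposition of $L^2(p_\theta)$ relative to the finite-dimensional subspace $V_D$. By the derivation leading to \Cref{eq:min-norm-moment-problem}, a weight $w\in L^2(p_\theta)$ is degree-$D$ exact exactly when $\langle w, Z^{d-1}\rangle_\theta=b_{D,d}$ for $d=1,\dots,D$. This uses $f(Z)=Z^d$, $f'(Z)=dZ^{d-1}$, and linearity over the monomial basis of $\mathcal P_D$. So the exactness class is an affine set: a particular solution plus the set of all $r$ with $\langle r, Z^{d-1}\rangle_\theta=0$ for every $d$, which is $V_D^\perp$.

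First I check that $w_D^\star=v_D^\top G_D^{-1}b_D$ is itself exact. It lies in $V_D$, and for each $d$,
\[
\E_\theta[v_D(Z)\,w_D^\star(Z)]=G_D G_D^{-1}b_D=b_D.
\]
For any other exact $w$, set $r=w-w_D^\star$. Subtracting the two sets of constraints gives $\langle r, Z^{d-1}\rangle_\theta=0$ for all $d\le D$. Hence $r\perp V_D$, which is the decomposition claimed. Since $w_D^\star\in V_D$, we also get $\langle w_D^\star, r\rangle_\theta=0$, and Pythagoras gives $\|w\|_2^2=\|w_D^\star\|_2^2+\|r\|_2^2$.

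Next I pass from norms to variances. Because $1\in V_D$ (here $D\ge1$), the condition $r\perp V_D$ gives $\E_\theta[r]=\langle r,1\rangle_\theta=0$. It follows that $\E_\theta[w]=\E_\theta[w_D^\star]$; indeed this common mean is fixed by the $d=1$ constraint and equals $b_{D,1}$. Then
\[
\Var_\theta[w]=\|w\|_2^2-(\E_\theta w)^2=\|w_D^\star\|_2^2+\|r\|_2^2-(\E_\theta w_D^\star)^2=\Var_\theta[w_D^\star]+\|r\|_2^2.
\]

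For uniqueness, equality in either the norm or the variance identity forces $\|r\|_2=0$, so $w=w_D^\star$ in $L^2(p_\theta)$. The only subtle point is that constancy of the mean across the exactness class relies on the constant function $1$ being among the representers. This is where $d=1$ enters, and it is why variance, not only norm, is minimized. Nonsingularity of $G_D$ is used only to guarantee that $w_D^\star$ is well defined. I expect no real obstacle beyond stating these identities carefully.
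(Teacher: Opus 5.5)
Your proposal is correct and follows essentially the same route as the paper. Both set $r=w-w_D^\star$ and use the shared moment constraints to get $r\perp V_D$, then apply Pythagoras via $w_D^\star\in V_D$, and pass to variances because the $d=1$ constraint fixes the mean. Your explicit feasibility check $G_DG_D^{-1}b_D=b_D$ is a step the paper takes for granted from \Cref{prop:min-norm-solution}, but it does not change the argument.
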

\Cref{prop:min-variance} states that any other admissible solution must have an excess null component, which increases weight variance; given ReinMax applied to Poisson with categorical representation is order 2, we can characterize its excess weight variance.

\begin{corollary}
\label{prop:poisson-reinmax}
For $Z\sim\Pois(\lambda)$ and $\eta=\log\lambda$, the large-truncation
limit of categorical ReinMax at $\tau=1$ is degree-2 exact, with
\begin{equation}
w_{\mathrm{RM}}(Z)=\underbrace{\frac{Z+\lambda}{2}}_{w_2^\star(Z)}
+
\underbrace{
\frac{(Z-\lambda)^2-(Z-\lambda)-\lambda}{2}
}_{r_{\mathrm{RM}}(Z) \perp V_2},\qquad
\Var[w_{\mathrm{RM}}]-\Var[w_2^\star]=\frac{\lambda^2}{2}.
\label{eq:poisson-reinmax-decomposition}
\end{equation}
\end{corollary}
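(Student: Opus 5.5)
The plan has two parts: first identify the pathwise weight that categorical ReinMax induces on a Poisson variable, then reduce everything else to \Cref{prop:min-variance} plus a short Poisson moment computation.

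\textbf{Step 1: the ReinMax weight.} I would represent $\Pois(\lambda)$ truncated to $\{0,\dots,K\}$ as a categorical with probabilities $\pi_k\propto \lambda^k/k!$, so that the logits are $\theta_k=k\eta-\log k!$ up to a shared constant, and $\partial_\eta\theta_k=k$. The downstream function acts on the one-hot sample $D$ through the embedding $\sum_k kD_k=Z$.

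At $\tau=1$, ReinMax's backward pass is a combination of softmax Jacobians:
\[
2\,\nabla\operatorname{softmax}\big((\theta+\log D)/2\big)-\tfrac12\nabla\operatorname{softmax}(\theta),
\]
contracted with $\partial f/\partial D_k=f'(Z)\,k$. I would evaluate these Jacobians at the hard sample $D=e_Z$, contract them with the value vector $(k)_k$ and with $\partial_\eta\theta=(k)_k$, and then let $K\to\infty$. Only the means and variances of $k$ under $\pi$ and under the mixed distribution $(\pi+e_Z)/2$ should survive, and these are Poisson moments in the limit. The result should be a quadratic polynomial in $Z$, which I would simplify to
\[
\frac{Z+\lambda}{2}+\frac{(Z-\lambda)^2-(Z-\lambda)-\lambda}{2}.
\]
This is the main obstacle. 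It needs careful bookkeeping of the ReinMax formula, and a justification of the truncation limit, for instance by dominated convergence, since Poisson tails decay superexponentially.

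\textbf{Step 2: orthogonality.} Set $Y=Z-\lambda$ and use the Poisson central moments $\mu_1=0$, $\mu_2=\lambda$, $\mu_3=\lambda$, $\mu_4=3\lambda^2+\lambda$. Then
\[
\E[r_{\mathrm{RM}}]=\tfrac12(\lambda-0-\lambda)=0,
\qquad
\E[Y\,r_{\mathrm{RM}}]=\tfrac12(\mu_3-\mu_2-\lambda\mu_1)=0.
\]
Hence $r_{\mathrm{RM}}\perp\operatorname{span}\{1,Z\}=V_2$.

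\textbf{Step 3: degree-2 exactness.} The constraints in \Cref{eq:min-norm-moment-problem} are inner products against $V_2$. The weight $w_2^\star$ from \Cref{eq:poisson-weights} satisfies them, and adding $r_{\mathrm{RM}}\perp V_2$ leaves every one of them unchanged, so $w_{\mathrm{RM}}$ is degree-2 exact. This also cross-checks the known second-order property of ReinMax.

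\textbf{Step 4: variance gap.} By \Cref{prop:min-variance}, the excess weight variance equals $\|r_{\mathrm{RM}}\|_2^2$. Expanding the square,
\[
\|r_{\mathrm{RM}}\|_2^2=\tfrac14\E[(Y^2-Y-\lambda)^2]
=\tfrac14\big(\mu_4+\mu_2+\lambda^2-2\mu_3-2\lambda\mu_2+2\lambda\mu_1\big).
\]
Substituting the moments gives $\tfrac14\cdot 2\lambda^2=\lambda^2/2$, as claimed.
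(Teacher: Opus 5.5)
Your proposal is correct and follows the paper's own route. The paper (Propositions~\ref{prop:reinmax-nef} and~\ref{prop:poisson-reinmax-limit}) likewise reduces $\tau=1$ categorical ReinMax to the scalar weight $\tfrac12\bigl(V_K+(Z_K-\mu_K)^2\bigr)$, passes to the limit via $\mu_K,V_K\to\lambda$ plus a finite fourth moment, and then checks $r_{\mathrm{RM}}\perp\operatorname{span}\{1,Z\}$ and $\|r_{\mathrm{RM}}\|_2^2=\lambda^2/2$ from Poisson central moments before invoking Proposition~\ref{prop:min-variance}.

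The one slip is your displayed ReinMax formula. For one-hot $D$, $\operatorname{softmax}((\theta+\log D)/2)\equiv e_Z$, so its Jacobian is zero. The first term should instead be the softmax Jacobian evaluated at the arithmetic mixture $\pi_1=(\pi+e_Z)/2$, giving $2J(\pi_1)-\tfrac12J(\pi)$ with $J(q)=\operatorname{diag}(q)-qq^\top$. With that reading, Step~1 is immediate: contracting with $(k)_k$ on both sides gives $\operatorname{Var}_q(k)$, hence $2\bigl(\tfrac{V_K}{2}+\tfrac{(Z-\mu_K)^2}{4}\bigr)-\tfrac{V_K}{2}=\tfrac12\bigl(V_K+(Z-\mu_K)^2\bigr)$.
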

The gap in weight variance is due to the null component $r_{RM}$.
Its expression and proof are in \Cref{prop:poisson-reinmax-limit}, and a general theorem about Reinmax applied to scalar exponential families is in \Cref{prop:reinmax-nef}, pointing out excess variance in combining ReinMax with categorical approximation.

Rao-Blackwellization (RB) is a variance reduction technique for compound representations. 
Proposition below shows compounding can lead to excess variance, and when RB can neutralize it.

\begin{proposition}
\label{prop:rao-blackwell}
Let $p_\theta(z,u)$ have marginal $p_\theta(z)$, and let
$\widetilde w_D(U,Z)\in L^2$ be degree-$D$ exact under the joint law.
Then $\bar w_D(Z)=\E_\theta[\widetilde w_D\mid Z]$ is also degree-$D$ exact,
and
\begin{equation}
\Var_\theta[\widetilde w_D]
=
\Var_\theta[w_D^\star]
+
\E_\theta[\Var_\theta(\widetilde w_D\mid Z)]
+
\|\bar w_D-w_D^\star\|_2^2.
\label{eq:rao-blackwell-decomposition}
\end{equation}
\end{proposition}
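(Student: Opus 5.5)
We first make precise what ``degree-$D$ exact under the joint law'' means. The weight $\widetilde w_D(U,Z)$ multiplies $f'(Z)$ for polynomials $f$ of the marginal variable $Z$, and the exactness conditions read
\[
\E_\theta[\widetilde w_D(U,Z)\,Z^{d-1}]=\tfrac1d\partial_\theta\E_\theta[Z^d],\qquad d=1,\ldots,D.
\]
Here the right-hand side depends only on the marginal $p_\theta(z)$. Exactness of $\bar w_D$ then follows from the tower property. Since $Z^{d-1}$ is $\sigma(Z)$-measurable,
\[
\E_\theta[\widetilde w_D Z^{d-1}]=\E_\theta\big[\E_\theta[\widetilde w_D\mid Z]\,Z^{d-1}\big]=\E_\theta[\bar w_D Z^{d-1}].
\]
So $\bar w_D$ satisfies the same $D$ constraints. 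By conditional Jensen, $\bar w_D\in L^2(p_\theta)$, so $\bar w_D$ lies in the marginal exactness class.

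For the decomposition, we split the variance in two stages.

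First, the law of total variance gives
\[
\Var_\theta[\widetilde w_D]=\E_\theta[\Var_\theta(\widetilde w_D\mid Z)]+\Var_\theta[\bar w_D].
\]
Here $\Var_\theta[\bar w_D]=\Var_\theta(\E_\theta[\widetilde w_D\mid Z])$. This identity needs only $\widetilde w_D\in L^2$.

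Second, we apply \Cref{prop:min-variance} to $\bar w_D$. Since $\bar w_D$ is a degree-$D$ exact weight in $L^2(p_\theta)$ and $G_D$ is nonsingular (assumed implicitly, as in that proposition), we can write $\bar w_D=w_D^\star+r$ with $r\perp V_D$. This gives
\[
\Var_\theta[\bar w_D]=\Var_\theta[w_D^\star]+\|r\|_2^2,\qquad r=\bar w_D-w_D^\star.
\]
Substituting into the first identity yields \Cref{eq:rao-blackwell-decomposition}.

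The only step that needs care is implicit in \Cref{prop:min-variance}: we need $\Var$ and the squared norm of $r$ to match there. Because $1\in V_D$ for $D\ge1$, $r\perp 1$ gives $\E_\theta[r]=0$. Hence $\E_\theta[\bar w_D]=\E_\theta[w_D^\star]$, and the cross term $\E_\theta[(w_D^\star-\E_\theta w_D^\star)\,r]$ vanishes because $w_D^\star-\E_\theta w_D^\star\in V_D$. We note this explicitly so that the variance gap equals $\|\bar w_D-w_D^\star\|_2^2$ rather than a centered version; the two coincide since $r$ has mean zero.

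The main obstacle is interpretive rather than technical. We must confirm that joint exactness concerns only test functions of $Z$, so that the right-hand sides coincide with those of the marginal problem. The $L^2$ and conditioning arguments are then routine.
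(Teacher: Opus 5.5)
Your proof is correct and is essentially the paper's argument. The paper writes $\widetilde w_D = w_D^\star + (\bar w_D - w_D^\star) + (\widetilde w_D - \bar w_D)$ as three mutually orthogonal pieces and converts the second-moment identity to variances via the mean fixed by the $f(z)=z$ constraint; you package the same two orthogonalities as the law of total variance followed by Proposition~\ref{prop:min-variance} applied to $\bar w_D$, with only the routine integrability of $\widetilde w_D Z^{d-1}$ in the tower step left implicit (it follows from Cauchy--Schwarz since $\widetilde w_D, Z^{d-1}\in L^2$).
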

RB removes auxiliary noise; least-norm optimization removes
the remaining null component. 
Rao--Blackwellization suffices only when $\bar w_D=w_D^\star$ 
, which we demonstrate below with the Gamma--Poisson compound representation of Negative Binomial variables. 
\begin{corollary}
\label{cor:gamma-poisson}
Let $\Lambda\sim\operatorname{Gamma}(r,\operatorname{rate}=(1-p)/p)$,
$Z\mid\Lambda\sim\Pois(\Lambda)$, and $\eta=\operatorname{logit}p$.
For the compound order-$D$ Poisson construction, conditioning its weight
on $Z$ gives
\begin{equation}
\bar w_2=w_2^\star,\qquad
\bar w_3=w_3^\star+r_3,\qquad r_3\perp V_3,\quad r_3\not\equiv0,
\label{eq:gamma-poisson-conditioning}
\end{equation}
where $w_D^\star$ is our Negative-Binomial weight.
\end{corollary}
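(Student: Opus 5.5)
The plan is to make the compound construction explicit, use \Cref{prop:rao-blackwell} to get exactness of the conditioned weight for free, and then use \Cref{prop:min-variance} to reduce both claims to whether $\bar w_D$ lies in $V_D$. No Gram matrix needs to be computed.

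\textbf{Step 1: the compound weight.} Write $\Lambda=e^{\eta}\,\Gamma$ with $\Gamma\sim\operatorname{Gamma}(r,1)$. This has the stated law, since rate $(1-p)/p=e^{-\eta}$. Then $\partial_\eta\Lambda=\Lambda$ is the exact continuous pathwise derivative for $\Lambda$, and $\partial_\eta\log\Lambda=1$. Given $\Lambda$, the Poisson Fini-$D$ weight in log-rate form, $w^{\mathrm{Pois}}_D(Z;\Lambda)$ from \Cref{eq:poisson-weights}, is exact for the inner expectation. Chaining gives the compound weight $\widetilde w_D(\Lambda,Z)=w^{\mathrm{Pois}}_D(Z;\Lambda)$.

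For $f\in\mathcal P_D$, the tower property gives
\[
\partial_\eta\E[f(Z)]=\partial_\eta\E[h(\Lambda)],\qquad h(\lambda)=\E_{\Pois(\lambda)}[f(Z)].
\]
The continuous pathwise identity for the Gamma scale family applied to $h$ (a polynomial in $\lambda$) together with Poisson order-$D$ exactness of $w^{\mathrm{Pois}}_D$ then yields
\[
\partial_\eta\E[f(Z)]=\E[\widetilde w_D f'(Z)].
\]
So $\widetilde w_D$ is degree-$D$ exact under the joint law, and \Cref{prop:rao-blackwell} shows that $\bar w_D=\E[\widetilde w_D\mid Z]$ is degree-$D$ exact.

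\textbf{Step 2: conditioning.} By Gamma--Poisson conjugacy, $\Lambda\mid Z\sim\operatorname{Gamma}(r+Z,\operatorname{rate}=1/p)$. Hence
\[
\E[\Lambda\mid Z]=p(r+Z),\qquad \E[\Lambda^{-1}\mid Z]=\frac{1}{p(r+Z-1)}\quad (r+Z>1).
\]

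For $D=2$ this gives $\bar w_2=\tfrac12\bigl(Z+p(r+Z)\bigr)\in V_2$. By \Cref{prop:min-variance}, $\bar w_2-w_2^\star\perp V_2$, and it also lies in $V_2$, so it is zero. Uniqueness alone therefore gives $\bar w_2=w_2^\star$.

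For $D=3$, substituting \Cref{eq:poisson-weights} gives
\[
\bar w_3=\tfrac{5p(r+Z)}{12}+\tfrac{8}{12}Z-\frac{Z(Z-1)}{12\,p\,(r+Z-1)}.
\]
On $Z=0$ the last term vanishes identically, so the possibly infinite $\E[\Lambda^{-1}\mid Z=0]$ when $r\le1$ is harmless. I would still check that $\bar w_3\in L^2$. Again by \Cref{prop:min-variance}, $r_3:=\bar w_3-w_3^\star\perp V_3$.

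\textbf{Step 3 (main obstacle): showing $r_3\not\equiv0$.} This holds exactly when $\bar w_3$ is not a polynomial of degree at most $2$ on the support $\mathbb N$. Because the support is infinite, this reduces to the function
\[
g(z)=\frac{z(z-1)}{z+r-1}
\]
not agreeing with a quadratic on $\mathbb N$. A rational function with a genuine pole cannot agree with a polynomial at infinitely many points, so the key is whether $z+r-1$ divides $z(z-1)$. Division leaves remainder $(1-r)(-r)\cdot(\text{const})$, which is nonzero unless $r\in\{0,1\}$. For $r\ne1$ this gives $r_3\not\equiv0$.

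The delicate case is $r=1$, the geometric distribution. There $g(z)=z-1$ is a polynomial, and the argument above would force $r_3\equiv0$. I would first recheck the conditional expectation there, including the $Z=0$ convention. If the computation stands, the corollary should be stated for $r\neq1$, or the nonvanishing claim should be read as generic in $r$.
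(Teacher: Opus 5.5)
Your Steps 1--2 are sound. Your reduction in Step 3 is also correct: since $r_3\perp V_3$ and $w_3^\star\in V_3$, $r_3\equiv0$ holds if and only if $\bar w_3\in V_3$. This route differs from the paper's. The paper compares $\bar w_3=O(z)$ against the explicit native weight of Appendix~\ref{subsec:nb-logit}, whose $z^2$ coefficient $-(1-p)^3/(12p(r+1))$ is nonzero for every $r>0$, $0<p<1$. Your route avoids needing that closed form. Likewise, at $D=2$ your uniqueness argument replaces the paper's explicit match against the native formula.

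The gap is your treatment of $r=1$. There you conclude the argument "would force $r_3\equiv0$" and propose restricting the corollary. That conclusion is wrong, because you dropped the point $z=0$, which lies in the support $\mathbb N_0$ with probability $(1-p)^r>0$. As you noted in Step 2, $\E[Z(Z-1)\Lambda^{-1}\mid Z=0]=0$, since $Z(Z-1)=0$ there and $\Lambda>0$ almost surely. So on the support your function is $g(z)=(z-1)_+$, not $z-1$.

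Concretely, at $r=1$, $\bar w_3$ coincides on $\{1,2,\dots\}$ with the linear function $\ell(z)=\tfrac{5p(1+z)}{12}+\tfrac{2z}{3}-\tfrac{z-1}{12p}$. At zero, however, $\bar w_3(0)=\tfrac{5p}{12}$ while $\ell(0)=\tfrac{5p}{12}+\tfrac{1}{12p}$. Any polynomial agreeing with $\bar w_3$ on $\mathbb N_0$ agrees with $\ell$ at infinitely many points, hence equals $\ell$, which contradicts the value at $0$. So $\bar w_3\notin V_3$ and $r_3\not\equiv0$ at $r=1$ as well. The statement holds for all $r>0$ and $0<p<1$ with no exception. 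You could equally fall back on the paper's growth comparison, which covers $r=1$ without a case split. With this fix, your divisibility argument, which is valid for $r\neq1$ since the pole $1-r$ then avoids $\mathbb N_0$, completes the proof.
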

Rao--Blackwellization closes the weight-variance gap at order 2 but not at
order 3.
The full statement and variance identities are in \Cref{app:rao-blackwell} and \Cref{app:gamma-poisson}.

With variance characterized, we bound the bias for smooth functions beyond our prescribed class.

\begin{proposition}
\label{prop:smooth-bias}
For a degree-$D$ exact weight $w$, let
$B_w(f)=\E_\theta[w(Z)f'(Z)]-\partial_\theta\E_\theta[f(Z)]$.
Fix $a\in\mathbb R$ independently of $\theta$ and write
$m_k(a)=\E_\theta[|Z-a|^k]$.
Under the smoothness, moment, and score-identity conditions in \Cref{app:bias-proof},
\begin{equation}
|B_w(f)|\le \|f^{(D+1)}\|_\infty
\left[
\frac{m_{2D}(a)^{1/2}}{D!}\|w\|_2
+
\frac{m_{2D+2}(a)^{1/2}}{(D+1)!}\|s_\theta\|_2
\right].
\label{eq:smooth-bias-bound}
\end{equation}
\end{proposition}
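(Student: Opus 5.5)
The plan is to Taylor-expand $f$ around the fixed point $a$ to order $D$. Because the bias functional vanishes on $\mathcal P_D$, only the Taylor remainder contributes. Write
\[
f(z)=P_a(z)+R(z),\qquad P_a(z)=\sum_{k=0}^{D}\frac{f^{(k)}(a)}{k!}(z-a)^k .
\]
Since $a$ does not depend on $\theta$, $P_a$ is a fixed polynomial of degree at most $D$. The functional $B_w$ is linear in $f$. It vanishes on constants, because $\partial_\theta\E_\theta[1]=0$ and $f'=0$. By degree-$D$ exactness, i.e.\ \Cref{eq:finite-order-pathwise} extended linearly, it also vanishes on $Z^d$ for $d=1,\dots,D$. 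Hence $B_w(P_a)=0$ and $B_w(f)=B_w(R)$. Note that $P_a$ has $\theta$-independent coefficients, so exactness applies to it directly.

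Next I bound the two pieces of $B_w(R)=\E_\theta[w R']-\partial_\theta\E_\theta[R]$ separately, using the Lagrange or integral form of the remainder.

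\textbf{Pathwise term.} We have $R'(z)=f'(z)-\sum_{k=1}^{D}\frac{f^{(k)}(a)}{(k-1)!}(z-a)^{k-1}$. This is exactly the order-$(D-1)$ Taylor remainder of $f'$, so
\[
|R'(z)|\le \|f^{(D+1)}\|_\infty\,\frac{|z-a|^D}{D!}.
\]
Cauchy--Schwarz in $L^2(p_\theta)$ then gives
\[
|\E_\theta[wR']|\le \|w\|_2\,\E_\theta\!\left[\frac{|Z-a|^{2D}}{(D!)^2}\right]^{1/2}\|f^{(D+1)}\|_\infty=\frac{m_{2D}(a)^{1/2}}{D!}\|w\|_2\|f^{(D+1)}\|_\infty .
\]

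\textbf{Score term.} Under the score-identity condition assumed in \Cref{app:bias-proof}, we have $\partial_\theta\E_\theta[R]=\E_\theta[R\,s_\theta]$. This is legitimate because $R$ is $\theta$-independent, with integrability supplied by the moment condition. The bound $|R(z)|\le\|f^{(D+1)}\|_\infty|z-a|^{D+1}/(D+1)!$ together with Cauchy--Schwarz yields
\[
\frac{m_{2D+2}(a)^{1/2}}{(D+1)!}\|s_\theta\|_2\,\|f^{(D+1)}\|_\infty .
\]
The triangle inequality combines the two pieces into \Cref{eq:smooth-bias-bound}.

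The main obstacle is the bookkeeping that justifies each step rather than any hard estimate. First, exactness must transfer to the shifted polynomial $P_a$; this holds because $a$ is $\theta$-free and exactness is linear, with constants handled trivially. Second, the interchange $\partial_\theta\E_\theta[R]=\E_\theta[Rs_\theta]$ must be justified for a non-polynomial $R$. This is exactly what the stated score-identity condition provides. In the discrete case it follows from differentiating $\sum_z R(z)p_\theta(z)$ termwise under dominated convergence, using the finiteness of $m_{2D+2}(a)$ and $\|s_\theta\|_2$. Third, all the moments involved must be finite, so that $R$ and $R'$ lie in $L^2(p_\theta)$. I would state these as the hypotheses and check them before applying Cauchy--Schwarz.
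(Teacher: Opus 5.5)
Your proposal is correct and follows the paper's proof essentially step for step. Both expand $f$ about the $\theta$-free point $a$ and use degree-$D$ exactness (plus the trivial constant case) to annihilate the Taylor polynomial. Both then invoke the assumed score identity to reduce the bias to $\E_\theta[w(f'-T_D')-s_\theta(f-T_D)]$, and bound each term by the Lagrange remainder and Cauchy--Schwarz.

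One minor aside: your remark that the score identity for $R$ follows by dominated convergence from finiteness of $m_{2D+2}(a)$ and $\|s_\theta\|_2$ at a single $\theta$ is not quite enough, since differentiating under the sum needs a bound that holds uniformly for $\theta$ in a neighborhood. This does not affect your proof, because the proposition takes the score identity as a hypothesis.
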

Exactness cancels the degree-$D$ Taylor polynomial, leaving bias controlled
by the first unmatched derivative. Least norm minimizes the bound's only
weight-dependent term. 
The bound therefore predicts small bias when the downstream computation is locally well approximated by low-degree polynomials, a regime we expect to be common for sufficiently smooth neural objectives.

\section{Related Work}
\label{sec:related-work}

\paragraph{Unbiased discrete gradient estimators.}
Score-function estimators use control variates,
multi-sample baselines, and antithetic, coupled, or Stein identities to reduce variance
\citep{
mcge_survey,nvil,rebar,relax,vimco,arm,disarm,coupled_grad,rodeo}.
MVD and GO gradients instead exploit distributional identities for lower variance, at the cost of additional $f$ evaluations \citep{mvd,mvd_vi,go}.
For Poisson, \secref{sec:pois-example} shows that quadrature of the MVD identity yields our weights.
These methods retain arbitrary-function unbiasedness but require many downstream function evaluations, whereas Fini has low bias and works with a single downstream evaluation.

\paragraph{Gradient estimators for the categorical distribution}
Categorical gradient methods provide a common interface for discrete variables, or \emph{categorical representation}, but their computation grows with the number of discrete states \citep{second_gsm_poglm}. 
Unlike categorical representation, Fini applies directly to the target law, avoiding excess compute overheads.
Gumbel--Softmax uses temperature-controlled continuous
relaxations \citep{gsm,concrete}, whereas ST estimators
preserve the hard sample and substitute its backward derivative
\citep{ste}.

ReinMax extends the ST estimator \citep{reinmax}.
It shows ST is a linear approximation, 
analogous to Euler's method in numerical quadrature. 
Given Euler's method's first-order bias can be improved to second-order, ST can also be improved similarly.
ReinMax thus replaces the linear ST by a quadratic one, and demonstrates consistently excellent empirical performance across benchmarks \citep{reinmax}, making it our principal generalist comparator.
\citet{hooperetal2026} extend this line of work by axiomatizing a broader ST family and deriving a second-order minimum-variance estimator (MVE).
Their estimator has a temperature knob that interpolates between ST and their second-order estimator.
Although theoretically MVE has lower weight variance than ReinMax, MVE does not consistently outperform ReinMax empirically.
Our order-2 solutions recover their one-hot and 1D estimators for categorical variables. 

\paragraph{Other distribution-specific estimators and compound estimators.} 
Although most discrete MC gradient estimation research focuses on the categorical distribution, many tailored estimators have been developed for distributions such as the Poisson.
Tailored estimators exploit distribution-specific structure and can be highly effective.
For Poisson, EAT-cubic differentiates a relaxed inter-arrival construction with cubic interpolation and is our principal specialist comparator \citep{knuth_poisson,pvae,eat_cubic}.
Compound representations transfer an estimator through auxiliary variables;
Gamma--Poisson, for example, reuses a Poisson estimator for a Negative-Binomial variable.
Proposition~\ref{prop:rao-blackwell} and
Corollary~\ref{cor:gamma-poisson} quantify the resulting variance overhead.
Our construction instead applies the general recipe directly to the target variable.

\section{Numerical Experiments}
\label{sec:experiments}
We test four empirical claims: 
(i) Fini is unbiased for functions inside its prescribed exactness class; 
(ii) For Poisson, low-order Fini  matches or outperforms competitors on nonlinear and hierarchical scenarios beyond their prescribed exactness class;
(iii) Fini performs well for other common discrete variables such as Negative Binomial and Binomial;
and (iv) across GPUs and benchmarks, low-order Fini consistently outspeeds all competitors. 


Our main Poisson-specific comparator is EAT-cubic, which differentiates a relaxed Poisson inter-arrival construction using cubic interpolation \citep{eat_cubic}.
Our general comparator is ReinMax, an ST estimator for categorical variables \citep{reinmax}; we apply it to count distributions (Poisson, Binomial, and Negative-Binomial) through categorical approximation where feasible and through the Gamma--Poisson representation for Negative-Binomial variables.
For unbounded categorical approximations, we choose the truncation so that the omitted tail has probability $ \leq 0.001$; the appendix gives more detail.
Both comparators have a tuned temperature $\tau$.
We also include EAT-sigmoid, Gumbel--Softmax, and score-function estimators where they are part of the original benchmark comparison. EAT-sigmoid and Gumbel--Softmax require a temperature.

We follow the metrics in prior works for the corresponding benchmarks.
Higher ELBO and lower perplexity are better.
Parenthesized numbers beside temperature-based methods give the selected $\tau$; Fini and the score-function estimator do not use a temperature.

\subsection{Linear Poisson VAE: Testing Order-2 Exactness}
\label{sec:linear-vae}

\begin{figure}[t]
    \centering
    \includegraphics[width=0.74\linewidth]{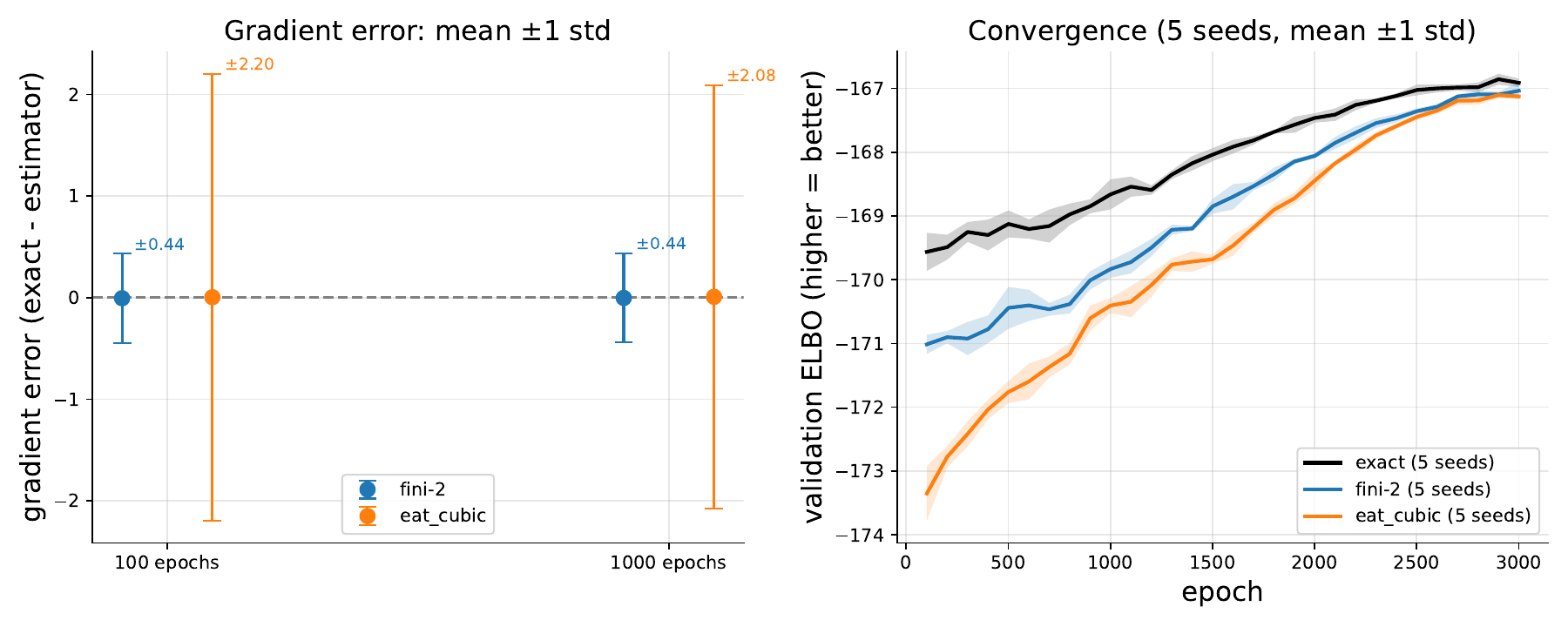}
    \captionsetup{font=small,skip=3pt}
    \caption{Linear Poisson VAE with 512 latent dimensions.
    \textbf{Left}: mean single-sample gradient error relative to the
    analytic gradient early (epoch 100) and partway through
    training (epoch 1,000), shown as dots with standard-deviation bars.
    Both means are near zero; Fini-2 has about one-fifth the standard
    deviation of tuned EAT-cubic.
    \textbf{Right}: validation ELBO over training, mean $\pm$ standard
    deviation across five matched seeds. Fini-2 follows the
    analytic-gradient trajectory more closely.}
    \label{fig:lin-vae}
\end{figure}

In a Poisson VAE, an encoder specifies a Poisson distribution over latent
counts $Z$ given an observation, and a decoder maps those counts back to an
observation distribution.
We use the linear-decoder van Hateren benchmark of \citet{eat_cubic}, which
has an analytic gradient for comparison.
With a Gaussian observation likelihood and a linear decoder, the benchmark's
downstream objective is quadratic in the latent counts, so Fini-2 is exact here.
We tune EAT-cubic over its reported temperature grid.
In Figure~\ref{fig:lin-vae}, the observed mean gradient errors at both
checkpoints are near zero, while Fini-2 has about one-fifth the gradient-error
standard deviation of tuned EAT-cubic.
Its validation-ELBO trajectory also lies closer to the analytic-gradient
reference during training.

\subsection{Performance on Realistic Benchmarks}
\label{sec:beyond-matched}

Proposition~\ref{prop:smooth-bias} bounds bias outside the prescribed class, but can low-order Fini perform well in realistic applications, where functions can be nonlinear, and have multiple stochastic layers?

\paragraph{Neural sequence model.}
The partially observed generalized linear model (POGLM) is an autoregressive latent variable model for inferring connectivities from partially observed spike trains. 
As a dynamic graphical model, it has both observed and latent Poisson variables whose rates depend on recent count histories \citep{second_gsm_poglm}. 
We reproduce the synthetic benchmark in \citet{eat_cubic}, following the same temperature grids and tuning protocol across 30 matched independent seeds.
Table~\ref{tab:poglm} compares validation ELBOs for Fini 2 and 3, EAT-cubic, Gumbel--Softmax, EAT-sigmoid, and the score-function estimator on the synthetic POGLM benchmark. The temperature-free Fini 2 and 3 have mean ELBOs close to tuned EAT-cubic, while the other baselines are worse.
\begin{table}[!htbp]
\centering
\begin{minipage}{\linewidth}
\captionsetup{font=footnotesize,skip=3pt}
\caption{Synthetic POGLM validation ELBO (higher is better), mean $\pm$ std. 
over 30 seeds. Parentheses give the selected temperature $\tau$. 
Boldface marks the three highest means, which differ by at most $0.02$.}
\label{tab:poglm}
\centering
\scriptsize
\setlength{\tabcolsep}{3.0pt}
\renewcommand{\arraystretch}{1.10}
\begin{tabular}{@{}lcccccc@{}}
\toprule
Dataset
& Fini-3
& Fini-2
& EAT-cubic (0.1)
& GSM (0.5)
& EAT-sigmoid (0.5)
& Score \\
\midrule
Synthetic
& $\mathbf{-330.61\pm0.79}$
& $\mathbf{-330.60\pm0.75}$
& $\mathbf{-330.62\pm0.79}$
& $-331.94\pm0.95$
& $-332.18 \pm 0.97$
& $-333.23 \pm 1.51$ \\
\bottomrule
\end{tabular}
\end{minipage}
\end{table}

\paragraph{Nonlinear Poisson VAEs.}
Following the discrete-VAE setup of \citet{reinmax}, we replace Bernoulli latents with Poisson latents on MNIST, Fashion-MNIST, and Omniglot.
We use dynamically binarized images with Bernoulli likelihoods and whitened continuous images with Gaussian likelihoods (\appref{app:nonlinear-count-vae-details}).
\Cref{tab:binary-latent-vae} compares Fini-2 and Fini-3 with EAT-cubic and ReinMax.
Fini-3 leads in four of six settings, including all Bernoulli-likelihood cases, with Fini-2 close behind.
Both orders occupy the leading tier across all six settings, but their ranking varies under Gaussian likelihoods: higher exactness does not uniformly improve optimization.

\begin{table}[t]
\centering
\begin{minipage}{\linewidth}
\captionsetup{font=footnotesize,skip=3pt}
\caption{Nonlinear Poisson-VAE training ELBO $\uparrow$, mean $\pm$ standard deviation over five matched seeds. Columns pair each dataset with a Bernoulli or Gaussian observation likelihood. 
First is bold, second is underlined.
}
\label{tab:binary-latent-vae}
\centering
\resizebox{\linewidth}{!}{%
\begin{tabular}{lcccccc}
\toprule
& \multicolumn{3}{c}{Bernoulli Likelihoods}
& \multicolumn{3}{c}{Gaussian Likelihoods} \\
\cmidrule(lr){2-4}
\cmidrule(lr){5-7}
& MNIST
& Fashion-MNIST
& Omniglot
& MNIST
& Fashion-MNIST
& Omniglot \\
\midrule
Fini-3
& $\bm{-118.60 \pm 0.26}$
& $\bm{-251.98 \pm 0.40}$
& $\bm{-129.38 \pm 0.07}$
& $\underline{595.48\pm3.72}$
& $\bm{133.74 \pm2.80}$
& $\underline{403.14 \pm 2.46}$ \\

Fini-2
& $\underline{-118.88 \pm 0.19}$
& $\underline{-252.26 \pm 0.18}$
& $\underline{-129.93\pm0.12}$
& $594.47\pm3.92$
& $132.09\pm3.29$
& $\bm{403.80 \pm 2.00}$ \\

EAT-cubic
& $-120.06 \pm 0.17$ (0.5)
& $-253.04 \pm 0.22$ (0.5)
& $-130.76 \pm 0.07$ (0.2)
& $\bm{595.66\pm 3.60}$ (0.5)
& $\underline{132.42\pm1.70}$ (0.5)
& $400.84\pm2.05$ (0.5)\\

ReinMax
& $-125.87 \pm 0.38$ (1.1)
& $-260.56 \pm 0.57$ (1.1)
& $-140.07 \pm 0.40$ (1.0)
& $589.21\pm2.78$ (1.0)
& $104.06\pm2.51$ (1.1)
& $396.63 \pm 1.53$ (1.0) \\
\bottomrule
\end{tabular}}
\end{minipage}
\end{table}

\paragraph{Hierarchical topic models.}
We next test our estimators on Deep Exponential Families (DEF), a deep generative model used for topic modeling and recommender systems \citep{def}.
A DEF generates observed data through multiple stochastic layers, where each layer draws from an exponential-family distribution whose natural parameters come from the previous layer.
We train amortized two-layer sparse Poisson DEFs on 20 Newsgroups (20 NG) and RCV1 using the inference architecture of \citet{nvil}; \appref{app:DEF-deets} gives the full setup. 

\begin{table}[t]
\centering
\begin{minipage}{\linewidth}
\captionsetup{font=footnotesize,skip=3pt}
\caption{Poisson DEF results on train and test ELBO ($\uparrow$) and heldout perplexity (PPL, $\downarrow$), mean $\pm$ standard deviation over five matched seeds. Best is bold; second is underlined.
}
\label{tab:topic-modeling-perplexity}
\centering
\small
\setlength{\tabcolsep}{3pt}
\renewcommand{\arraystretch}{1.06}
\resizebox{\linewidth}{!}{%
\begin{tabular}{@{}llccc@{\hspace{6pt}}ccc@{}}
\toprule
&
& \multicolumn{3}{c}{20 Newsgroups}
& \multicolumn{3}{c}{RCV1} \\
\cmidrule(lr){3-5}
\cmidrule(lr){6-8}
Model
& Training method
& Train ELBO $\uparrow$
& Test ELBO $\uparrow$
& PPL $\downarrow$
& Train ELBO $\uparrow$
& Test ELBO $\uparrow$
& PPL $\downarrow$ \\
\midrule
\multirow{4}{*}{Poisson DEF}
& Fini-3
& $\bm{-179.30 \pm 2.81}$
& $\underline{-199.31 \pm 2.04}$
& $660.7 \pm 7.6$
& $\underline{-269.27 \pm 3.16}$
& $\underline{-267.64 \pm 3.73}$
& $\bm{379.3 \pm 5.4}$ \\

& Fini-2
& $\bm{-179.30 \pm 2.62}$
& $\bm{-198.98 \pm 1.92}$
& $\underline{659.7 \pm 6.7}$
& $\bm{-269.20 \pm 3.05}$
& $\bm{-267.60 \pm 3.73}$
& $\underline{381.9 \pm 4.6}$ \\

& EAT-cubic (0.2)
& $\underline{-180.94 \pm 2.59}$
& $-199.77 \pm 2.17$
& $664.6 \pm 8.5$
& $-271.92 \pm 3.18$
& $-270.16 \pm 3.82$
& $388.4 \pm 4.1$ \\

& ReinMax (1.3)
& $-191.25 \pm 2.88$
& $-210.01 \pm 2.37$
& $\bm{638.1 \pm 10.5}$
& $-301.86 \pm 2.83$
& $-300.05 \pm 3.27$
& $384.4 \pm 4.8$ \\
\bottomrule
\end{tabular}%
}
\end{minipage}
\end{table}

\Cref{tab:topic-modeling-perplexity} shows the results.
Fini 2 and 3 have the best train and test ELBOs on both datasets. 
ReinMax ranks first on heldout perplexity for the smaller 20NG, and Fini-3 leads on RCV1.
To diagnose, we split the ELBO into log-likelihood and the KL terms, and use the low variance MVD as the ground truth.
ReinMax at \(\tau=1.3\) tilts toward worse KL and better log likelihood, which strongly correlates with perplexity; Fini closely tracks MVD on both datasets, suggesting ReinMax’s 20NG win may reflect bias rather than better gradients.
See \Cref{app:def-rate-distortion} for details. 

\paragraph{Summary.} 
Fini-2 and Fini-3 match tuned EAT-cubic on POGLM, improve most of its ELBOs,
and outperform ReinMax on every displayed ELBO. The preferred order of Fini remains task dependent.

\subsection{Generality Across Common Discrete Variables}
\label{sec:nonlinear-vae-more}

We evaluate Poisson, Binomial, and Negative-Binomial VAEs on MNIST with latent dimensions
2, 16, and 128. 
ReinMax uses categorical representation for Binomial and Poisson (with truncation). 
Overdispersion in Negative-Binomial makes direct truncation impractical, so we use the Gamma–Poisson representation.
Results are in \Cref{tab:mnist-count-vae-train}.
Fini-3 leads at higher latent dimensions, while Fini-2 leads in two of the three 2-dimensional settings - exactness order remains problem-dependent.

\begin{table}[!htbp]
\centering
\begin{minipage}{0.88\linewidth}
\captionsetup{font=footnotesize,skip=3pt}
\caption{Count-VAE training ELBO, mean $\pm$ standard deviation over five independent matched seeds. Best is bold; second is underlined.}
\label{tab:mnist-count-vae-train}
\centering
\scriptsize
\renewcommand{\arraystretch}{1.02}
\resizebox{\linewidth}{!}{%
\begin{tabular}{@{}lcccc@{}}
\toprule
Family
& Latent dim.
& Fini-2
& Fini-3
& ReinMax \\
\midrule
\multirow{3}{*}{\shortstack[l]{binomial\\$n=8, p=0.5$}}
& 2
& $\mathbf{-174.51 \pm 0.45}$
& $\underline{-176.91 \pm 1.07}$
& $-179.67 \pm 0.67$ (1.0) \\
& 16
& $\underline{-123.38 \pm 0.14}$
& $\mathbf{-123.11 \pm 0.10}$
& $-125.55 \pm 0.10$ (1.0) \\
& 128
& $\underline{-118.73 \pm 0.22}$
& $\mathbf{-118.43 \pm 0.17}$
& $-121.45 \pm 0.15$ (1.1) \\
\midrule
\multirow{3}{*}{\shortstack[l]{negbin\\$r=8, p=0.5$}}
& 2
& $\underline{-177.34 \pm 0.15}$
& $\mathbf{-177.31 \pm 0.24}$
& $-182.58 \pm 0.48$ (1.0) \\
& 16
& $\underline{-129.29 \pm 0.16}$
& $\mathbf{-129.25 \pm 0.20}$
& $-136.58 \pm 0.71$ (1.0) \\
& 128
& $\underline{-122.58 \pm 0.06}$
& $\mathbf{-122.54 \pm 0.36}$
& $-124.41 \pm 0.24$ (1.1) \\
\midrule
\multirow{3}{*}{\shortstack[l]{poisson\\$\lambda_0=0.1$}}
& 2
& $\mathbf{-190.37 \pm 0.66}$
& $\underline{-191.77 \pm 1.57}$
& $-199.33 \pm 2.57$ (1.0) \\
& 16
& $\underline{-146.74 \pm 0.62}$
& $\mathbf{-145.68 \pm 0.31}$
& $-170.60 \pm 0.39$ (1.0) \\
& 128
& $\underline{-123.20 \pm 0.30}$
& $\mathbf{-122.80 \pm 0.40}$
& $-129.99 \pm 1.02$ (1.0) \\
\bottomrule
\end{tabular}%
}
\end{minipage}
\end{table}

\paragraph{Compound representation adds statistical overhead.}
Negative-Binomial latents can reuse a Poisson estimator through an
auxiliary rate $\Lambda$.
Corollary~\ref{cor:gamma-poisson} predicts that conditioning recovers
Fini-2 exactly, while at order 3 a nonzero $Z$-only component can remain.

\begin{figure}[!htbp]
\centering
\begin{minipage}{0.82\linewidth}
\centering
\includegraphics[width=\linewidth]{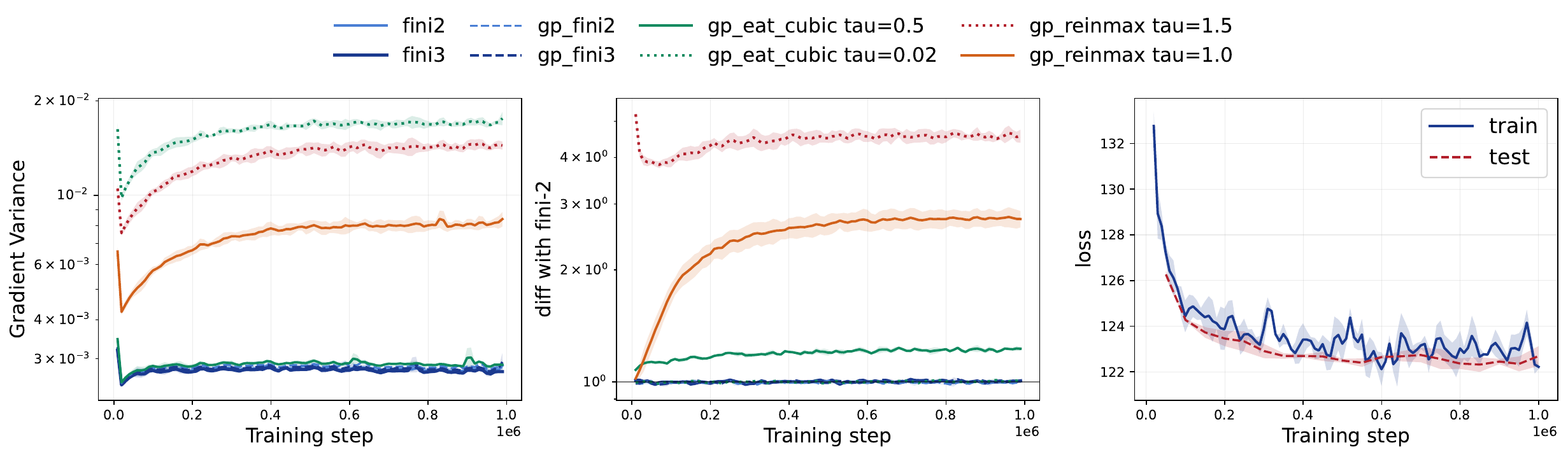}
\captionsetup{font=footnotesize,skip=3pt}
\caption{Negative-Binomial representation diagnostic.
Native Fini-2 and Fini-3 have lower gradient variance than matched
Gamma--Poisson counterparts; ReinMax has higher variance.}
\label{fig:rb-exp}
\end{minipage}
\end{figure}

We reproduce the variance study of \cite{disarm} with
Negative-Binomial latents.
The 128-dimensional variance results are in \Cref{fig:rb-exp}.
In this benchmark, each native estimator has lower gradient variance than its matched Gamma–Poisson counterpart, while ReinMax has higher variance.


\noindent
\begin{minipage}[t]{0.60\linewidth}
\vspace{0pt}
\textbf{Categorical representation adds computational overhead.}
For Binomial latents, the released hard 1D MVE implementation of \citet{hooperetal2026} operates over $K=n+1$ categorical states, whereas our fixed-order weights' costs are independent of the support size.
Figure~\ref{fig:binom-mve-support-scaling} shows the resulting support-size scaling.
The compound and categorical results therefore test different costs of non-native representations: statistical overhead from auxiliary variables and computational overhead from categorical embedding.
\end{minipage}
\hfill
\begin{minipage}[t]{0.36\linewidth}
\vspace{0pt}
\centering
\includegraphics[width=\linewidth,]{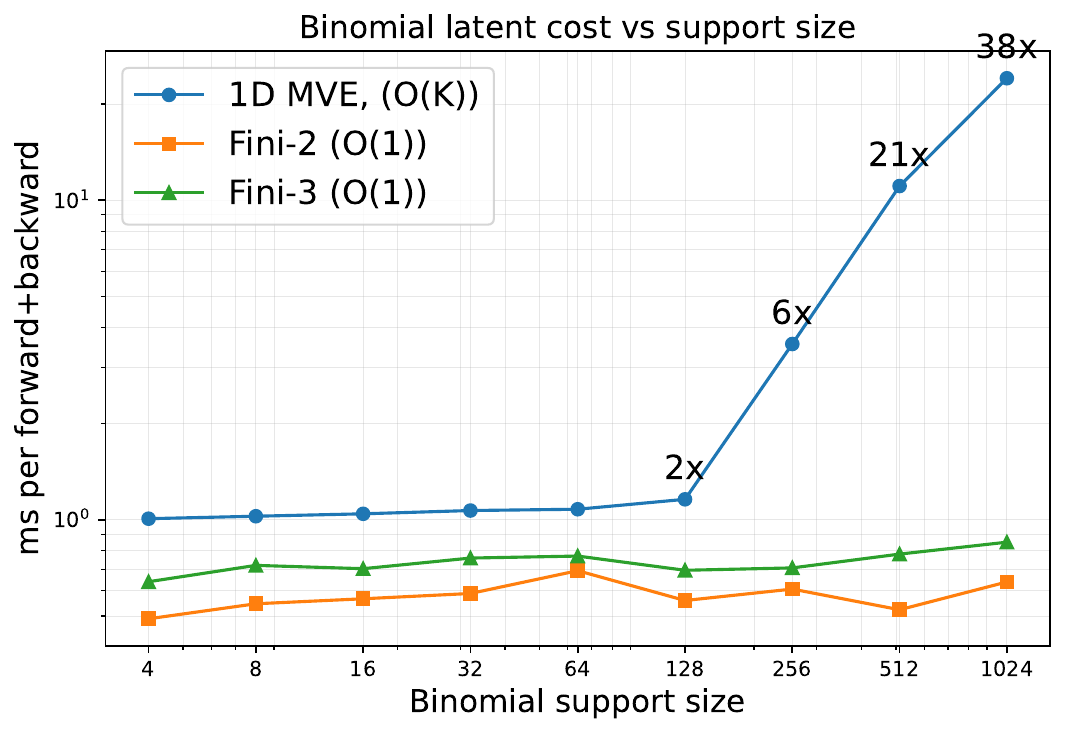}
\captionsetup{font=footnotesize,skip=3pt}
\captionof{figure}{Categorical support scaling. At $1024$, the released 1D MVE implementation takes $38\times$ as long as Fini-2. } 
\label{fig:binom-mve-support-scaling}
\end{minipage}
\par\medskip

\subsection{Speed Comparisons}
\label{sec:comp-speed}

Fini 2 and 3 require one forward--backward pass and evaluate affine or quadratic polynomials.
Across three benchmarks on L4 and A100 GPUs, Fini-2 and 3 are first and second in every workload--hardware pair (Table~\ref{tab:speed-results}); \appref{app:timing-details} gives the five-repeat post-warm-up protocol.
Fini-2 is consistently faster because its weight is affine rather than quadratic.

\begin{table}[!htbp]
\centering
\begin{minipage}{\linewidth}
\captionsetup{font=footnotesize,skip=3pt}
\caption{Wall-clock(s) per epoch ($\downarrow$), mean $\pm$ standard deviation over five repeats. }
\label{tab:speed-results}
\centering
\resizebox{\linewidth}{!}{%
\begin{tabular}{lcccccc}
\toprule
& \multicolumn{2}{c}{Linear VAE}
& \multicolumn{2}{c}{Nonlinear VAE}
& \multicolumn{2}{c}{DEF} \\
\cmidrule(lr){2-3}
\cmidrule(lr){4-5}
\cmidrule(lr){6-7}
Estimator
& L4 & A100
& L4 & A100
& L4 & A100 \\
\midrule
Fini-3
& $\underline{0.326 \pm 0.010}$
& $\underline{0.361 \pm 0.008}$
& $\underline{1.868 \pm 0.074}$
& $\underline{2.016 \pm 0.083}$
& $\underline{0.928 \pm 0.021}$
& $\underline{0.907 \pm 0.016}$ \\
Fini-2
& $\bm{0.309 \pm 0.005}$
& $\bm{0.338 \pm 0.004}$
& $\bm{1.768 \pm 0.017}$
& $\bm{1.848 \pm 0.046}$
& $\bm{0.863 \pm 0.014}$
& $\bm{0.835 \pm 0.016}$ \\
EAT-cubic
& $1.427 \pm 0.005$
& $0.581 \pm 0.010$
& $2.680 \pm 0.040$
& $2.942 \pm 0.073$
& $1.546 \pm 0.039$
& $1.565 \pm 0.015$ \\
ReinMax
& $1.144 \pm 0.006$
& $0.568 \pm 0.009$
& $2.692 \pm 0.035$
& $2.859 \pm 0.104$
& $1.451 \pm 0.024$
& $1.610 \pm 0.134$ \\
\bottomrule
\end{tabular}%
}
\end{minipage}
\end{table}

\section{Discussion}
\paragraph{Limitations: }\label{sec:limitations} Fini requires relevant moments and their parameter derivatives, and order-$D$ exactness requires at least $D$ support points.
Outside the prescribed class, Proposition~\ref{prop:smooth-bias} requires smoothness and finite moments; polynomial unbiasedness alone does not guarantee small bias.
Finally, higher $D$ adds an orthogonal correction and cannot decrease
the raw second moment; the fourth-order Poisson example in
\appref{app:poisson-order4} shows that its variance can worsen
at low rates.
Thus, estimator order controls a bias--variance tradeoff rather than
providing a uniform improvement.

\paragraph{Conclusion: } \label{sec:conclusion}We propose Fini, a recipe for extending pathwise gradients to discrete distributions by prescribing degree-$D$ polynomial exactness and then choosing the least-norm solution.
For common discrete variables, the resulting polynomial weights preserve the exact forward sample, require one 
forward--backward pass, and use neither temperature tuning nor auxiliary randomness.
Moreover, this recipe recovers classical pathwise gradients for continuous random variables.
Our theory extends the guarantees beyond the prescribed exactness class through a non-asymptotic bias bound for smooth functions and decomposes excess weight variance into auxiliary-variable noise and orthogonal null components.
Empirically, Fini-2 and 3 remain in the leading performance tier across increasingly realistic benchmarks, while ranking first and second in every runtime benchmark. 
Thus, we provide a general derivation principle for extending pathwise gradient estimators beyond continuous random variables, yielding low-bias, low-variance estimators with simple, efficient implementations.

\bibliographystyle{iclr2027_conference}
\bibliography{references}

\clearpage
\appendix
\startcontents[appendix]

\begin{center}
  \rule{\textwidth}{1.2pt}\par
  \vspace{0.8em}
  {\Large\bfseries
    Extending Pathwise Gradients to Discrete Random Variables\\
    via Finite-Order Relaxation\\[0.4em]
    Appendix\par
  }
  \vspace{0.8em}
  \rule{\textwidth}{0.4pt}
\end{center}
\vspace{0.5em}

\printcontents[appendix]{}{1}[3]{\setlength{\parskip}{0pt}}

\clearpage
\section{Discrete all-function pathwise obstruction}
\label{app:bump-proof}

\begin{proposition}[No universal exact sample pathwise weight on isolated support]
\label{prop:discrete-pathwise-obstruction}
Let $p_\theta$ be a differentiable family on a fixed discrete support
$\mathcal S\subset\mathbb R$.
Suppose that, at the parameter value under consideration, some
$z_0\in\mathcal S$ is isolated and
$\partial_\theta p_\theta(z_0)\neq0$.
Then there is no $Z$-measurable weight $w_\theta(Z)$ satisfying
\[
\E_\theta[w_\theta(Z)f'(Z)]
=
\partial_\theta\E_\theta[f(Z)]
\]
for every smooth $f$ for which the expectation exists.
\end{proposition}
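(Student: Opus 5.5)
We argue by contradiction using a bump function localized at the isolated point $z_0$. Suppose such a weight $w_\theta$ exists. Since $z_0$ is isolated in $\mathcal S$, there is $\delta>0$ with $(z_0-\delta,z_0+\delta)\cap\mathcal S=\{z_0\}$. Choose a smooth, compactly supported $f$ with support in $(z_0-\delta,z_0+\delta)$ that is constant near $z_0$, say $f\equiv1$ on $(z_0-\delta/2,z_0+\delta/2)$. A standard mollified indicator works. Then $f$ is bounded, so $\E_\theta[f(Z)]$ exists for all $\theta$. Moreover $f$ vanishes on $\mathcal S\setminus\{z_0\}$ and equals $1$ at $z_0$.

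\textbf{Right-hand side.} Since $f(z)=0$ for every $z\in\mathcal S\setminus\{z_0\}$ and $f(z_0)=1$, we have $\E_\theta[f(Z)]=p_\theta(z_0)$ for every $\theta$ near the parameter value considered. The support $\mathcal S$ is fixed, so this identity holds in a neighborhood of that value. Differentiability of the family then gives $\partial_\theta\E_\theta[f(Z)]=\partial_\theta p_\theta(z_0)\neq0$.

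\textbf{Left-hand side.} The derivative $f'$ vanishes on $(z_0-\delta/2,z_0+\delta/2)$ because $f$ is constant there, so $f'(z_0)=0$. It also vanishes at every other support point, because those lie outside the support of $f$. Hence $f'(Z)=0$ almost surely, and $\E_\theta[w_\theta(Z)f'(Z)]=0$ regardless of $w_\theta$. No integrability condition on $w_\theta$ is needed. This contradicts the nonzero right-hand side, so no such weight exists.

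\textbf{Main obstacle.} The only delicate step is the construction of $f$. It must be smooth, equal to $1$ near $z_0$, and zero on the rest of $\mathcal S$. Isolation of $z_0$ is what makes this possible.

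Points of $\mathcal S$ other than $z_0$ may accumulate, but they lie at distance at least $\delta$ from $z_0$. The function $f$ is identically zero outside $(z_0-\delta,z_0+\delta)$, so both $f$ and $f'$ vanish there, accumulation points included.

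We take the standard bump $\phi(x)=\exp(-1/(1-x^2))$ on $(-1,1)$. We build from it a smooth step equal to $1$ on $[-1/2,1/2]$ and supported in $(-1,1)$, then rescale it to $(z_0-\delta,z_0+\delta)$. This also makes clear that the argument needs only a fixed support near the parameter value, not any moment assumptions.
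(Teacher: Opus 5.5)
Your proposal is correct and matches the paper's proof: a smooth bump at the isolated point $z_0$ forces $f'(Z)=0$ almost surely, so the left side vanishes, while the right side equals $\partial_\theta p_\theta(z_0)\neq 0$. Your extra details only make explicit what the paper leaves implicit. These are taking $f\equiv 1$ near $z_0$ rather than just $f'(z_0)=0$, and noting that $\E_\theta[f(Z)]=p_\theta(z_0)$ holds for all nearby $\theta$ because the support is fixed.
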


\begin{proof}
Because $z_0$ is isolated, choose a smooth bump function $f$ supported in
an interval containing no other point of $\mathcal S$, with
$f(z_0)=1$ and $f'(z_0)=0$.
Then $f'(Z)=0$ almost surely, and hence
$\E_\theta[w_\theta(Z)f'(Z)]=0$ for every weight $w_\theta$.
On the other hand,
$\E_\theta[f(Z)]=p_\theta(z_0)$, so
\[
\partial_\theta\E_\theta[f(Z)]
=
\partial_\theta p_\theta(z_0)
\neq0,
\]
a contradiction.
\end{proof}
\section{Solving the Optimization Problem}
\label{app:optimization}

\begin{proposition}[Least-norm solution]
\label{prop:min-norm-solution}
Let
\[
v_D(Z)=(1,Z,\ldots,Z^{D-1})^\top,
\qquad
G_D=\E_\theta[v_D(Z)v_D(Z)^\top],
\]
and let $b_D$ denote the degree-$D$ exactness constraints, so that every
feasible weight satisfies
$\E_\theta[w(Z)v_D(Z)]=b_D$.
If $G_D$ is nonsingular, the unique least-norm degree-$D$ exact weight in $L^2(p_\theta)$ is
\begin{equation}
w_D^\star(z)
=
v_D(z)^\top G_D^{-1}b_D.
\label{eq:min-norm-solution}
\end{equation}
\end{proposition}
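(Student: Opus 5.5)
The argument is the standard orthogonal-projection proof for a minimum-norm solution of finitely many linear constraints in a Hilbert space. I will verify three things: feasibility of the candidate, orthogonality of every feasible perturbation to $V_D=\operatorname{span}\{1,Z,\ldots,Z^{D-1}\}$, and a Pythagorean identity that yields both minimality and uniqueness.

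\textbf{Feasibility.} Set $c=G_D^{-1}b_D$ and $w_D^\star=v_D^\top c$. The moments through order $2D-2$ are finite, so $w_D^\star$ is a polynomial of degree at most $D-1$ and lies in $L^2(p_\theta)$. By linearity of expectation,
\[
\E_\theta[w_D^\star(Z)v_D(Z)]=\E_\theta[v_D(Z)v_D(Z)^\top]c=G_Dc=b_D .
\]
Hence $w_D^\star$ satisfies all $D$ constraints of problem~(\ref{eq:min-norm-moment-problem}).

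\textbf{Orthogonality of feasible perturbations.} Let $w\in L^2(p_\theta)$ be any feasible weight and put $r=w-w_D^\star$. Subtracting the two constraint systems gives $\langle r,Z^{k}\rangle_\theta=0$ for $k=0,\ldots,D-1$. By linearity, $r\perp V_D$. Since $w_D^\star\in V_D$, we get $\langle w_D^\star,r\rangle_\theta=0$, and therefore
\[
\|w\|_2^2=\|w_D^\star\|_2^2+\|r\|_2^2 .
\]
So $\|w\|_2\ge\|w_D^\star\|_2$, with equality iff $\|r\|_2=0$, i.e.\ $w=w_D^\star$ in $L^2(p_\theta)$. This proves both minimality and uniqueness.

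\textbf{Where nonsingularity is used.} This is the only delicate point, and it is mild. Nonsingularity of $G_D$ makes $c$ well defined. It also means $1,Z,\ldots,Z^{D-1}$ are linearly independent in $L^2(p_\theta)$, because $c^\top G_Dc=\|v_D^\top c\|_2^2$. Consequently the constraints are consistent and the representer of $w_D^\star$ in $V_D$ is unique.

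One could instead invoke the Riesz representation theorem: the constraint functionals $w\mapsto\langle w,Z^{k}\rangle_\theta$ have representers $Z^k$, and the minimal-norm element of the affine feasible set is the unique point of that set lying in their span. The direct Pythagorean argument above avoids this machinery. Uniqueness holds only up to $p_\theta$-a.e.\ equality, which is the natural notion of equality in $L^2(p_\theta)$.
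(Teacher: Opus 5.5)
Your proof is correct and is essentially the paper's argument. The paper notes that projecting any feasible weight onto $V_D$ preserves the constraints without increasing the norm, then solves $G_Dc=b_D$ inside $V_D$. You instead verify the candidate directly and use the decomposition $w=w_D^\star+r$ with $r\perp V_D$, the same Pythagorean identity the paper uses for Proposition~\ref{prop:min-variance}, which makes the uniqueness step explicit where the paper only attributes it tersely to nonsingularity of $G_D$.
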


\begin{proof}
Let $\mathcal V_D=\operatorname{span}\{1,Z,\ldots,Z^{D-1}\}$.
Since every exactness constraint is an inner product with an element of
$\mathcal V_D$, orthogonal projection onto $\mathcal V_D$ preserves
feasibility and cannot increase the $L^2(p_\theta)$ norm.
Hence the minimum-norm weight has the form
$w_D^\star(z)=v_D(z)^\top c$.
Its exactness constraints reduce to
\[
G_Dc=b_D,
\qquad\text{hence}\qquad
c=G_D^{-1}b_D,
\]
which proves \eqref{eq:min-norm-solution}; nonsingularity of $G_D$ gives uniqueness.
\end{proof}

\subsection{Augmented representations and Rao--Blackwellization}
\label{app:rao-blackwell}

Suppose $p_\theta(z,u)$ is an augmented representation of the marginal
law $p_\theta(z)$, and let $\widetilde w_D(U,Z)\in L^2$ be any
degree-$D$ exact compound weight:
\begin{equation}
\E_\theta\!\left[\widetilde w_D(U,Z)f'(Z)\right]
=
\partial_\theta\E_\theta[f(Z)],
\qquad
f\in\mathcal P_D.
\label{eq:compound-exactness}
\end{equation}
Let
$\bar w_D(Z)=\E_\theta[\widetilde w_D(U,Z)\mid Z]$ denote its
Rao--Blackwellization.

\begin{proposition}[Rao--Blackwellization and native projection]
\label{prop:rao-blackwell-appendix}
Let $w_D^\star$ be the native minimum-$L^2(p_\theta)$ degree-$D$ exact
weight.
Then $\bar w_D$ is degree-$D$ exact and
\begin{equation}
\begin{aligned}
\|\widetilde w_D\|_2^2
&=
\|w_D^\star\|_2^2
+
\E_\theta\!\left[
\operatorname{Var}_\theta(\widetilde w_D\mid Z)
\right]
+
\|\bar w_D-w_D^\star\|_2^2, \\
\operatorname{Var}_\theta[\widetilde w_D]
&=
\operatorname{Var}_\theta[w_D^\star]
+
\E_\theta\!\left[
\operatorname{Var}_\theta(\widetilde w_D\mid Z)
\right]
+
\|\bar w_D-w_D^\star\|_2^2.
\end{aligned}
\label{eq:rao-blackwell-decomposition-appendix}
\end{equation}
Hence the native construction has no greater weight variance than the compound estimator or its Rao--Blackwellization.
If $\bar w_D\in\mathcal V_D$, then
$\bar w_D=w_D^\star$, so the native estimator is exactly the
Rao--Blackwellized compound estimator.
\end{proposition}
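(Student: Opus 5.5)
The argument has three steps: show $\bar w_D$ is exact, split $\widetilde w_D$ by the law of total variance, and split $\bar w_D$ using the orthogonal decomposition of \Cref{prop:min-variance}.

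\textbf{Exactness of $\bar w_D$.} Every test function $f\in\mathcal P_D$ gives a derivative $f'(Z)$ that depends only on $Z$. By the tower property,
\[
\E_\theta[\widetilde w_D(U,Z)f'(Z)]=\E_\theta\bigl[\E_\theta[\widetilde w_D\mid Z]f'(Z)\bigr]=\E_\theta[\bar w_D(Z)f'(Z)].
\]
The left-hand side equals $\partial_\theta\E_\theta[f(Z)]$ by \eqref{eq:compound-exactness}, so $\bar w_D$ satisfies the same constraints. Conditional expectation is a contraction on $L^2$, so $\bar w_D\in L^2(p_\theta)$. Hence $\bar w_D$ is a degree-$D$ exact weight on the marginal law.

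\textbf{First-layer decomposition.} Write $\widetilde w_D=\bar w_D+(\widetilde w_D-\bar w_D)$. The second term is orthogonal in the joint $L^2$ space to every $Z$-measurable square-integrable function, in particular to $\bar w_D$. Therefore
\[
\|\widetilde w_D\|_2^2=\|\bar w_D\|_2^2+\E_\theta[\Var_\theta(\widetilde w_D\mid Z)].
\]

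\textbf{Second-layer decomposition.} Since $\bar w_D$ is exact, \Cref{prop:min-variance} (equivalently \Cref{prop:min-norm-solution}) gives $\bar w_D=w_D^\star+r$ with $r\perp\mathcal V_D$. Explicitly, $r$ lies in the null space of the constraint functionals, and $w_D^\star\in\mathcal V_D$. Pythagoras then yields $\|\bar w_D\|_2^2=\|w_D^\star\|_2^2+\|\bar w_D-w_D^\star\|_2^2$. Combining this with the previous display gives the norm identity.

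\textbf{Passing to variances.} The constant function $1$ lies in $\mathcal V_D$. Hence all exact weights share the mean $\E_\theta[w]=\langle w,1\rangle_\theta=b_{D,1}$, and likewise $\E_\theta[\widetilde w_D]=\E_\theta[\bar w_D]=b_{D,1}$. Subtracting $b_{D,1}^2$ from both sides of the norm identity gives the variance identity. The mean equality is the step that needs care: it requires $D\ge1$, and for the compound weight it follows from the $d=1$ constraint applied to $f(Z)=Z$.

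\textbf{Consequences.} The two extra terms are nonnegative, which gives the comparison with both $\widetilde w_D$ and $\bar w_D$. If $\bar w_D\in\mathcal V_D$, then $r=\bar w_D-w_D^\star$ lies in $\mathcal V_D\cap\mathcal V_D^\perp=\{0\}$, so $\bar w_D=w_D^\star$.

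I expect no real obstacle. The only points to verify are the $L^2$ membership and measurability of $\bar w_D$ on the marginal space, and the fact that the inner products on the joint and marginal spaces agree for $Z$-measurable functions.
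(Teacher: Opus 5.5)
Your proposal is correct and follows essentially the same route as the paper: the tower property gives exactness of $\bar w_D$, and the paper writes $\widetilde w_D = w_D^\star + (\bar w_D - w_D^\star) + (\widetilde w_D - \bar w_D)$ as one mutually orthogonal three-term decomposition, which you assemble instead as two successive Pythagoras steps. In both, the common mean fixed by the $f(z)=z$ constraint converts the norm identity into the variance identity.
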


\begin{proof}
By the tower property,
\[
\E_\theta[\bar w_D(Z)f'(Z)]
=
\E_\theta[\widetilde w_D(U,Z)f'(Z)],
\]
so $\bar w_D$ preserves degree-$D$ exactness.
Moreover,
$\widetilde w_D-\bar w_D$ is orthogonal to every square-integrable
function of $Z$, while exactness of $\bar w_D$ and $w_D^\star$ gives
$\bar w_D-w_D^\star\perp\mathcal V_D$.
Since $w_D^\star\in\mathcal V_D$, these three components are mutually
orthogonal:
\begin{equation}
\widetilde w_D
=
w_D^\star
+
(\bar w_D-w_D^\star)
+
(\widetilde w_D-\bar w_D).
\label{eq:rao-blackwell-orthogonal-decomposition-appendix}
\end{equation}
Orthogonality gives the first identity in
\eqref{eq:rao-blackwell-decomposition-appendix}, with
\[
\|\widetilde w_D-\bar w_D\|_2^2
=
\E_\theta[
\operatorname{Var}_\theta(\widetilde w_D\mid Z)].
\]
Degree-$D$ exactness fixes the mean of all three weights through the
linear test function $f(z)=z$, giving the variance identity.
\end{proof}

\subsection{Gamma--Poisson specialization}
\label{app:gamma-poisson}
For
\[
\Lambda
\sim
\operatorname{Gamma}\!\left(
r,\operatorname{rate}=\frac{1-p}{p}
\right),
\qquad
Z\mid\Lambda\sim\operatorname{Pois}(\Lambda),
\qquad
\eta=\operatorname{logit}p,
\]
write $\Lambda=e^\eta G$ with
$G\sim\operatorname{Gamma}(r,\operatorname{rate}=1)$.
Since $\partial_\eta\log\Lambda=1$, applying the order-$D$ Poisson
log-rate weight conditionally on $\Lambda$ gives a degree-$D$ exact
compound weight.

At order 2,
$\widetilde w_2(Z,\Lambda)=(Z+\Lambda)/2$.
Conjugacy gives
\[
\Lambda\mid Z=z
\sim
\operatorname{Gamma}\!\left(
r+z,\operatorname{rate}=\frac1p
\right),
\]
and therefore
\begin{equation}
\begin{aligned}
\E[\widetilde w_2(Z,\Lambda)\mid Z=z]
&=
\frac{z+p(r+z)}{2} \\
&=
\frac{rp}{2}+\frac{1+p}{2}z
=
w_2^\star(z).
\end{aligned}
\label{eq:negbin-rao-blackwell}
\end{equation}
Thus the native order-$2$ Negative-Binomial weight is exactly the
Rao--Blackwellization of its Gamma--Poisson counterpart.

At order 3, the conditional Poisson weight can be written as
\begin{equation}
\widetilde w_3(Z,\Lambda)
=
\frac{5\Lambda}{12}
+
\frac{2Z}{3}
-
\frac{Z(Z-1)}{12\Lambda}.
\label{eq:gamma-poisson-order3-compound}
\end{equation}
Under the posterior above,
$\E[\Lambda\mid Z=z]=p(r+z)$ and, for $z\ge2$,
$\E[\Lambda^{-1}\mid Z=z]=1/[p(r+z-1)]$.
Because $z(z-1)=0$ for $z\in\{0,1\}$, conditioning
\eqref{eq:gamma-poisson-order3-compound} gives, for all $z\in\mathbb N_0$,
\begin{equation}
\bar w_3(z)
=
\frac{5p(r+z)}{12}
+
\frac{2z}{3}
-
\frac{z(z-1)}{12p(r+z-1)},
\label{eq:negbin-order3-rao-blackwell}
\end{equation}
where the final term is set to zero for $z<2$.
Both $\bar w_3$ and the native $w_3^\star$ are degree-$3$ exact, so
\[
r_3:=\bar w_3-w_3^\star\perp
\operatorname{span}\{1,Z,Z^2\}.
\]
Moreover, \eqref{eq:negbin-order3-rao-blackwell} is $O(z)$ as
$z\to\infty$, whereas the native formula in
\appref{subsec:nb-logit} satisfies
\[
w_3^\star(z)
=
-\frac{(1-p)^3}{12p(r+1)}z^2+O(z).
\]
Hence $r_3\not\equiv0$ for $r>0$ and $0<p<1$.
Proposition~\ref{prop:rao-blackwell} therefore separates the compound
order-$3$ variance gap into auxiliary conditional variance and the
additional native null-component norm $\|r_3\|_2^2$.

\section{Connection with other methods}
\label{app:connections}

\subsection{Recovering pathwise derivatives}
\label{app:pathwise}

\begin{proof}[Proof of Scalar location--scale recovery.]
Because $w_2^\star\in V_2=\operatorname{span}\{1,Z\}$, write $w_2^\star(z)=a+bz$.
Linear and quadratic exactness impose
\begin{equation}
a+b\mu_\theta=\dot\mu,
\qquad
a\mu_\theta+b(\mu_\theta^2+\sigma_\theta^2)
=
\mu_\theta\dot\mu+\sigma_\theta\dot\sigma.
\label{eq:location-scale-constraints}
\end{equation}
Thus $b=\dot\sigma/\sigma_\theta$ and $a=\dot\mu-\mu_\theta\dot\sigma/\sigma_\theta$, so
\begin{equation}
w_2^\star(z)
=
\dot\mu
+
\frac{\dot\sigma}{\sigma_\theta}(z-\mu_\theta).
\label{eq:location-scale-weight}
\end{equation}
Since $Z=\mu_\theta+\sigma_\theta\varepsilon$,
$\partial_\theta Z=\dot\mu+\dot\sigma\varepsilon=w_2^\star(Z)$, which gives exactness for every sufficiently regular $f$.
\end{proof}

For a pure location family, $\dot\sigma=0$, so degree-1 exactness already suffices.

\begin{proof}[Proof of Gaussian-vector recovery]
The gradients of quadratic polynomials span
\[
\mathcal V_2
=
\{a+A(z-\mu_\theta):a\in\mathbb R^K,\ A=A^\top\}.
\]
Hence the least-norm solution is $W_2^\star(z)=a+A(z-\mu_\theta)$ with $A=A^\top$.
Linear and quadratic exactness yield
\begin{equation}
a=\dot\mu,
\qquad
A\Sigma_\theta+\Sigma_\theta A=\dot\Sigma.
\label{eq:gaussian-vector-constraints}
\end{equation}
Because $\Sigma_\theta\succ0$, the Sylvester equation has a unique symmetric solution $A$, giving the weight results.
Gaussian integration by parts then gives
\begin{equation}
\E_\theta[W_2^\star(Z)^\top\nabla f(Z)]
=
\dot\mu^\top\E_\theta[\nabla f(Z)]
+
\frac12\operatorname{tr}\!\left(\dot\Sigma\,\E_\theta[\nabla^2f(Z)]\right)
=
\partial_\theta\E_\theta[f(Z)],
\label{eq:gaussian-full-exactness}
\end{equation}
for every sufficiently regular $f$.
\end{proof}

\subsection{Connection to measure-valued derivatives and quadrature}
\label{app:mvd-quadrature}

For $Z\sim\operatorname{Pois}(\lambda)$ with $\eta=\log\lambda$,
\[
\partial_\eta\E[f(Z)]
=
\lambda\E[f(Z+1)-f(Z)]
=
\lambda\E\!\left[\int_0^1 f'(Z+t)\,dt\right].
\]
Replacing the integral with polynomially exact quadrature and returning the shifted terms to the original Poisson law yields the same weights as the moment construction.

For degree-2 exactness, $f'$ is linear, so the two-node rule at $t=0,1$ is exact:
\begin{align}
\partial_\eta\E[f(Z)]
&=
\frac{\lambda}{2}\E[f'(Z)+f'(Z+1)]
\nonumber\\
&=
\E\!\left[\frac{\lambda+Z}{2}f'(Z)\right],
\label{eq:poisson-mvd-order2}
\end{align}
where $\lambda\E[g(Z+1)]=\E[Zg(Z)]$.
Thus quadrature recovers $w_2^\star(Z)=(\lambda+Z)/2$.

For degree-3 exactness, interpolate $f'(Z+t)$ at $t=0,1,2$.
Exact integration of quadratics on $[0,1]$ gives weights $5/12$, $2/3$, and $-1/12$:
\begin{align}
\partial_\eta\E[f(Z)]
&=
\lambda\E\!\left[
\frac{5}{12}f'(Z)
+\frac{2}{3}f'(Z+1)
-\frac{1}{12}f'(Z+2)
\right]
\nonumber\\
&=
\E\!\left[
\left(
\frac{5\lambda}{12}
+\frac{2}{3}Z
-\frac{Z(Z-1)}{12\lambda}
\right)f'(Z)
\right],
\label{eq:poisson-mvd-order3}
\end{align}
using $\lambda^2\E[g(Z+2)]=\E[Z(Z-1)g(Z)]$.
Hence
\[
w_3^\star(Z)
=
\frac{5\lambda}{12}
+\frac{8\lambda+1}{12\lambda}Z
-\frac{1}{12\lambda}Z^2,
\]
again matching the least-norm moment solution.

The identity above is the measure-valued-derivative representation of the Poisson gradient: a finite difference between neighboring laws.
The same argument applies to other distributions with finite-difference derivative identities; the Binomial quadratic weight, for example, follows from the analogous two-node construction.

\subsection{Connection to ReinMax and generalized straight-through methods}
\label{app:categorical-connections}

\begin{lemma}[Categorical quadratic exactness]
\label{lem:categorical-quadratic-exactness}
Let $Z\sim\operatorname{Cat}(p)$ be represented by
$Z\in\{e_1,\ldots,e_K\}$, where $p=\operatorname{softmax}(\eta)$
has $p_i>0$ for all $i$, and let
$J=\operatorname{diag}(p)-pp^\top$.
A same-path estimator
\[
    \widehat g(Z)=W(Z)\nabla f(Z),
    \qquad
    W_i:=W(e_i)\in\mathbb R^{K\times K},
\]
is exact for every polynomial of degree at most two if and only if
\begin{equation}
    W_i e_i=\frac12(e_i-p),
    \qquad
    p_iW_i e_j+p_jW_j e_i=0\quad(i\neq j),
    \qquad
    \sum_i p_iW_i=J .
    \label{eq:categorical-quadratic-constraints}
\end{equation}
\end{lemma}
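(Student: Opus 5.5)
The plan is to reduce exactness to a finite set of test monomials and read off \eqref{eq:categorical-quadratic-constraints} one monomial at a time. Both sides of the exactness identity $\E_\theta[W(Z)\nabla f(Z)]=\nabla_\eta\E_\theta[f(Z)]$ are linear in $f$. Polynomials of degree at most two in $z\in\mathbb R^K$ are spanned by $1$, the $z_j$, the squares $z_j^2$, and the cross terms $z_jz_k$ with $j<k$. Exactness on degree-$\le2$ polynomials is therefore equivalent to exactness on each of these monomials, as in \Cref{eq:finite-order-pathwise}. The constant is trivial, since both sides vanish.

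Throughout I will use $\E_\theta[W(Z)\nabla f(Z)]=\sum_i p_iW_i\nabla f(e_i)$. I will also use $\nabla_\eta p_j=Je_j=p_j(e_j-p)$, which holds because $J$ is the symmetric softmax Jacobian.

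\textbf{Linear monomials.} For $f=z_j$ we have $\nabla f\equiv e_j$, so exactness reads $\sum_ip_iW_ie_j=Je_j$. Collecting over $j$ gives exactly $\sum_ip_iW_i=J$.

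\textbf{Squares.} For $f=z_j^2$ we have $\nabla f(z)=2z_je_j$, which equals $2e_j$ at $e_j$ and vanishes at $e_i$ for $i\ne j$. The left side is therefore $2p_jW_je_j$. On the one-hot support $z_j^2=z_j$, so the right side is $\nabla_\eta p_j=p_j(e_j-p)$. Dividing by $p_j>0$ gives $W_je_j=\tfrac12(e_j-p)$.

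\textbf{Cross terms.} For $f=z_jz_k$ with $j\ne k$ we have $\nabla f(z)=z_ke_j+z_je_k$. This evaluates to $e_j$ at $e_k$, to $e_k$ at $e_j$, and to zero at every other vertex. The left side is therefore $p_kW_ke_j+p_jW_je_k$. Since $z_jz_k\equiv0$ on the support, the right side is zero. This is the middle condition, and it is symmetric in $(i,j)$, so one condition per unordered pair suffices.

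For the converse, suppose the three families of conditions hold. Then exactness holds for every monomial of degree at most two by the computations above, hence by linearity for every polynomial of degree at most two. This gives the ``if and only if''.

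I expect no serious obstacle. The main care points are bookkeeping ones. First, the squares must be included in the spanning set even though $z_j^2=z_j$ on the support, because their gradients differ from those of $z_j$; this is what produces the diagonal condition. Second, the orientation of the gradient (row versus column) must be fixed consistently; the symmetry of $J$ makes the final constraints independent of that choice.
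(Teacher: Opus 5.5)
Your proof is correct and follows the paper's argument. The paper writes $f(z)=a^\top z+\tfrac12 z^\top Hz$, reduces exactness to $\sum_i p_iW_i(a+He_i)=J\bigl(a+\tfrac12\operatorname{diag}(H)\bigr)$, and tests the coefficient of $a$, then $H=e_ie_i^\top$, then $H=e_ie_j^\top+e_je_i^\top$; these are exactly your linear, square, and cross-term monomials, and the converse likewise follows because these test functions span the quadratics.
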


\begin{proof}
Write
$f(z)=a^\top z+\frac12z^\top Hz$ with $H=H^\top$.
Quadratic exactness is equivalent to
\begin{equation}
    \sum_i p_iW_i(a+He_i)
    =
    J\left(a+\frac12\operatorname{diag}(H)\right)
    \qquad
    \text{for all }a\text{ and }H=H^\top .
    \label{eq:categorical-quadratic-general}
\end{equation}
The coefficient of $a$ gives the final constraint in
\eqref{eq:categorical-quadratic-constraints}. Taking
$H=e_ie_i^\top$ and
$H=e_ie_j^\top+e_je_i^\top$ gives the first two constraints.
Conversely, these matrices span the symmetric matrices, so
\eqref{eq:categorical-quadratic-constraints} implies
\eqref{eq:categorical-quadratic-general}.
\end{proof}

\begin{proposition}[Symmetry characterizes ReinMax]
\label{prop:categorical-reinmax}
Let $W$ be quadratic-exact under the assumptions of
Lemma~\ref{lem:categorical-quadratic-exactness}.
Then
\[
    W_i=W_i^\top
    \quad\text{for every }i
\]
if and only if
\begin{equation}
    W_i
    =
    \frac12
    \left[
        J+(e_i-p)(e_i-p)^\top
    \right],
    \qquad i=1,\ldots,K .
    \label{eq:reinmax-categorical-weight}
\end{equation}
Thus sample-wise symmetry uniquely selects the ReinMax weight at $\tau=1$ within the quadratic-exact class.
\end{proposition}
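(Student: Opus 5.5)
We need to prove that, within the quadratic-exact class of Lemma~\ref{lem:categorical-quadratic-exactness}, sample-wise symmetry $W_i=W_i^\top$ holds if and only if $W_i=\tfrac12[J+(e_i-p)(e_i-p)^\top]$.

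\emph{Sufficiency.} The candidate in \eqref{eq:reinmax-categorical-weight} is a sum of symmetric matrices, so it is symmetric. It remains to check the three constraints of \eqref{eq:categorical-quadratic-constraints}.
\begin{itemize}
\item First constraint: $Je_i=p_i(e_i-p)$ and $(e_i-p)^\top e_i=1-p_i$. Hence $W_ie_i=\tfrac12[p_i+(1-p_i)](e_i-p)=\tfrac12(e_i-p)$.
\item Second constraint: for $j\ne i$, $Je_j=p_j(e_j-p)$ and $(e_i-p)^\top e_j=-p_j$. Hence $W_ie_j=\tfrac12 p_j(e_j-p)-\tfrac12 p_j(e_i-p)=\tfrac12 p_j(e_j-e_i)$. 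Then $p_iW_ie_j+p_jW_je_i=\tfrac12p_ip_j[(e_j-e_i)+(e_i-e_j)]=0$.
\item Third constraint: $\sum_ip_i(e_i-p)(e_i-p)^\top=\operatorname{diag}(p)-pp^\top=J$. Hence $\sum_ip_iW_i=\tfrac12(J+J)=J$.
\end{itemize}

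\emph{Necessity.} Assume $W$ is quadratic-exact and every $W_i$ is symmetric.
\begin{itemize}
\item The first constraint fixes column $i$ of $W_i$: $W_ie_i=\tfrac12(e_i-p)$.
\item By symmetry of $W_i$, row $i$ is also fixed, so $e_i^\top W_ie_j=\tfrac12(\delta_{ij}-p_j)$.
\item Symmetry of $W_j$ and its first constraint give $e_i^\top W_je_j=\tfrac12(\delta_{ij}-p_i)=-\tfrac12p_i$ for $i\ne j$. This fixes row $i$ of $W_j$ only in column $j$, which does not yet pin down $W_ie_j$.
\item The second constraint is a statement about the vector $p_iW_ie_j+p_jW_je_i$. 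Its $i$-th component reads $p_i(-\tfrac12p_j)+p_j\,e_i^\top W_je_i$. But $e_i^\top W_je_i$ is the $(i,i)$ entry of $W_j$, which is not yet known, so this component gives no closure on its own.
\end{itemize}
The plan is therefore to parametrize the unknowns. Write $W_i=\tfrac12[J+(e_i-p)(e_i-p)^\top]+\Delta_i$ with $\Delta_i$ symmetric. The constraints become:
\begin{itemize}
\item $\Delta_ie_i=0$, so $\Delta_i$ has zero row and column $i$;
\item $p_i\Delta_ie_j+p_j\Delta_je_i=0$ for $i\ne j$;
\item $\sum_ip_i\Delta_i=0$.
\end{itemize}
We then show that these force $\Delta\equiv0$. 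Take entry $k$ of the second constraint, with $k\notin\{i,j\}$ (the cases $k\in\{i,j\}$ vanish by the zero row and column): $p_i(\Delta_i)_{kj}+p_j(\Delta_j)_{ki}=0$. Let $k=i$ in the third constraint at entry $(i,j)$. Since $(\Delta_i)_{ij}=0$, this gives $\sum_{l\ne i}p_l(\Delta_l)_{ij}=0$. Combining this with the antisymmetric-type relation and the symmetry of each $\Delta_l$ should yield a cyclic system, roughly $x_{ijk}:=p_i(\Delta_i)_{jk}$ symmetric in $(j,k)$ and antisymmetric under swapping $i\leftrightarrow j$ (with the roles relabeled). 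Being symmetric under one transposition and antisymmetric under another forces $x_{ijk}=0$ for distinct $i,j,k$. This is the standard argument that a tensor symmetric in one pair and antisymmetric in an overlapping pair vanishes.

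\textbf{Main obstacle.} This tensor-symmetry bookkeeping is the crux: we must correctly verify the antisymmetry of $x$ under index exchange, given that $(\Delta_i)_{kj}$ pairs with $(\Delta_j)_{ki}$. If it holds, alternating the two symmetries six times returns $x_{ijk}=-x_{ijk}$, so $x_{ijk}=0$. Since $p_i>0$, this gives $\Delta_i=0$ and uniqueness follows. If the index pattern turns out to be different, we would fall back on a direct dimension count on the linear system for the entries $\{(\Delta_i)_{jk}\}$.
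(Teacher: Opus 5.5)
Your route is, in coordinates, the paper's own. With $R_i=\Delta_i$ and $\mathcal R(u)=\sum_i p_iu_iR_i$, your $x_{ijk}=p_i(\Delta_i)_{jk}$ is the paper's trilinear form $r^\top\mathcal R(u)v$ evaluated at $(r,u,v)=(e_j,e_i,e_k)$. The index pattern you flagged as the main obstacle is correct. Symmetry of $\Delta_i$ makes $x$ symmetric in its last two slots. The $k$-th component of $p_i\Delta_ie_j+p_j\Delta_je_i=0$ reads $x_{ikj}=-x_{jki}$, which is antisymmetry in the first and third slots. The six-fold alternation therefore forces zero, and the third constraint $\sum_ip_i\Delta_i=0$ is never needed.

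The gap is your parenthetical claim that the components $k\in\{i,j\}$ of the second constraint ``vanish by the zero row and column.'' They do not. For $k=i$ the component is $p_i(\Delta_i)_{ij}+p_j(\Delta_j)_{ii}$. Only the first term is killed by the zero row of $\Delta_i$. The second term is an off-position diagonal entry of $\Delta_j$, and the zero row and column $j$ of $\Delta_j$ say nothing about it; your own earlier bullet noticed exactly this unknown $(i,i)$ entry of $W_j$. The entries $(\Delta_j)_{ii}$ with $i\neq j$ correspond to $x_{jii}$, which has a repeated index. So ``$x_{ijk}=0$ for distinct $i,j,k$'' together with the zero rows and columns does not give $\Delta_i=0$.

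The constraints you actually retain are genuinely insufficient. Take $K=3$ with $\Delta_1=0$, $\Delta_2=p_2^{-1}e_1e_1^\top$, $\Delta_3=-p_3^{-1}e_1e_1^\top$. These are symmetric and satisfy $\Delta_ie_i=0$, every component with $k\notin\{i,j\}$, and $\sum_lp_l\Delta_l=0$, yet they are nonzero. They violate only the component you discarded: $p_1\Delta_1e_2+p_2\Delta_2e_1=e_1$.

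The repair is one line: keep the $k=i$ component, which gives $(\Delta_j)_{ii}=0$ outright. Equivalently, note that $x_{abc}=-x_{cba}$ holds for \emph{all} triples; for $a=c$ it is just $\Delta_ae_a=0$. Your alternation argument then runs over all $(a,b,c)$, not only distinct ones. The paper's coordinate-free version avoids the issue automatically, because polarizing $\mathcal R(u)u=0$ yields $\mathcal R(u)v=-\mathcal R(v)u$ for all $u,v$ with no case split.
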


\begin{proof}
Write $d_i=e_i-p$ and denote the right-hand side of
\eqref{eq:reinmax-categorical-weight} by $W_i^{\rm RM}$.
Since $Je_i=p_i d_i$ and
$\sum_i p_i d_i d_i^\top=J$, its quadratic-exactness follows from
Lemma~\ref{lem:categorical-quadratic-exactness}:
\begin{align}
    W_i^{\rm RM}e_i
    &=
    \frac12\left[p_i d_i+(1-p_i)d_i\right]
    =\frac12d_i,
    \nonumber\\
    W_i^{\rm RM}e_j
    &=
    \frac{p_j}{2}(e_j-e_i),
    \qquad i\neq j,
    \nonumber\\
    \sum_i p_iW_i^{\rm RM}
    &=
    \frac12
    \left[
        J+\sum_i p_i d_i d_i^\top
    \right]
    =J .
    \label{eq:reinmax-exactness-check}
\end{align}
The matrices $W_i^{\rm RM}$ are symmetric, so it remains to show
uniqueness under symmetry.

Let $R_i:=W_i-W_i^{\rm RM}$ and define the linear matrix-valued map
\[
    \mathcal R(u):=\sum_i p_i u_iR_i,
    \qquad u\in\mathbb R^K .
\]
Both $W$ and $W^{\rm RM}$ are quadratic-exact. Applying their
difference to
$f_u(z)=\frac12(u^\top z)^2$ gives
$\mathcal R(u)u=0$ for every $u$. Polarization therefore gives
\begin{equation}
    \mathcal R(u)v=-\mathcal R(v)u
    \qquad
    \text{for all }u,v\in\mathbb R^K .
    \label{eq:reinmax-residual-skew}
\end{equation}
Since $R_i=R_i^\top$, each $\mathcal R(u)$ is symmetric. Hence, for
arbitrary $r,u,v\in\mathbb R^K$,
\begin{align}
    r^\top\mathcal R(u)v
    &=
    v^\top\mathcal R(u)r
    =
    -v^\top\mathcal R(r)u
    =
    -u^\top\mathcal R(r)v
    \nonumber\\
    &=
    u^\top\mathcal R(v)r
    =
    r^\top\mathcal R(v)u
    =
    -r^\top\mathcal R(u)v .
    \label{eq:reinmax-residual-zero}
\end{align}
Thus $\mathcal R(u)v=0$ for all $u,v$. Taking $u=e_i$ gives
$p_iR_iv=0$ for every $v$, hence $R_i=0$ because $p_i>0$.
Therefore $W_i=W_i^{\rm RM}$ for every $i$.
\end{proof}

\begin{proposition}[Least-norm categorical quadratic estimator]
\label{prop:categorical-least-norm}
Let $Z\sim\operatorname{Cat}(p)$ be represented by
$Z\in\{e_1,\ldots,e_K\}$, where $p=\operatorname{softmax}(\eta)$
has $p_i>0$ for all $i$, and let
$J=\operatorname{diag}(p)-pp^\top$.
Consider quadratic-exact same-path estimators
\[
    \widehat g(Z)=W(Z)\nabla f(Z),
    \qquad
    W_i:=W(e_i)\in\mathbb R^{K\times K}.
\]
Among all such weights, there is a unique minimizer of
$\mathbb E\|W(Z)\|_F^2$.

For $i\neq j$, define
$\omega_{ij}=p_ip_j/(p_i+p_j)$, and let
$G\in\mathbb R^{K\times K}$ have entries
\[
    G_{ij}=-\omega_{ij}\quad(i\neq j),
    \qquad
    G_{ii}=\sum_{j\neq i}\omega_{ij}.
\]
If $C\in\mathbb R^{K\times K}$ satisfies
\[
    GC=\frac12J,
\]
then the minimum-norm weight is
\[
    W_i^\star e_j
    =
    \begin{cases}
        \dfrac12(e_i-p),
        & j=i, \\[2mm]
        \dfrac{p_j}{p_i+p_j}
        C^\top(e_j-e_i),
        & j\neq i .
    \end{cases}
\]
In particular, one may take
$C=\frac12G^\dagger J$.
\end{proposition}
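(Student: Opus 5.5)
The plan is to reduce the minimization, via Lemma~\ref{lem:categorical-quadratic-exactness}, to a quadratic program over the off-diagonal columns of the $W_i$. That program turns out to be a minimum-energy flow problem on the complete graph with conductances $\omega_{ij}$, and its solution is given by potentials solving a graph Laplacian system.

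\textbf{Step 1: reparametrize the objective.} Write $\E\|W(Z)\|_F^2=\sum_i p_i\sum_j\|W_ie_j\|^2$. By the first constraint in \eqref{eq:categorical-quadratic-constraints}, the diagonal columns $W_ie_i=\tfrac12(e_i-p)$ are fixed, so only $x_{ij}:=W_ie_j$ for $i\neq j$ are free. Set $y_{ij}:=p_ix_{ij}$. The pair constraint $p_iW_ie_j+p_jW_je_i=0$ says exactly that $y$ is antisymmetric, $y_{ji}=-y_{ij}$. For each unordered pair the cost is
\[
p_i\|x_{ij}\|^2+p_j\|x_{ji}\|^2=\|y_{ij}\|^2\left(\tfrac1{p_i}+\tfrac1{p_j}\right)=\|y_{ij}\|^2/\omega_{ij}.
\]

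\textbf{Step 2: rewrite the remaining constraint.} Take column $j$ of $\sum_ip_iW_i=J$. This gives $p_j\tfrac12(e_j-p)+\sum_{i\neq j}y_{ij}=Je_j=p_j(e_j-p)$, that is,
\[
\sum_{i\neq j}y_{ij}=\tfrac12Je_j .
\]
So we must minimize $\sum_{i<j}\|y_{ij}\|^2/\omega_{ij}$ over antisymmetric vector-valued edge flows with prescribed divergence $\tfrac12Je_j$ at each node $j$. This is a strictly convex quadratic in $y$ over an affine set.

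\textbf{Step 3: feasibility and uniqueness.} $G$ is the Laplacian of the complete graph with positive weights $\omega_{ij}$. Since the graph is connected, $\ker G=\ker G^\top=\operatorname{span}\{\mathbf 1\}$. The system $GC=\tfrac12J$ is therefore solvable because $\mathbf 1^\top J=p^\top-p^\top=0$. Any solution has the form $C+\mathbf 1c^\top$, and in particular $C=\tfrac12G^\dagger J$ is one. Define $y_{ij}:=\omega_{ij}C^\top(e_j-e_i)$. This is antisymmetric, and its divergence at node $j$ equals the transpose of row $j$ of $GC$, which is $\tfrac12Je_j$ because $J$ is symmetric; so $y$ is feasible. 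Adding $\mathbf 1c^\top$ to $C$ leaves every difference $C^\top(e_j-e_i)$ unchanged, so this $y$ does not depend on which solution $C$ is chosen. Existence of a feasible point together with strict convexity gives a unique minimizer.

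\textbf{Step 4: optimality (the main obstacle).} The key step is to show that this $y$ is the minimizer, and I will verify it directly by orthogonality rather than through Lagrange multipliers. Let $h$ be any feasible perturbation, i.e.\ an antisymmetric flow with zero divergence, and write $\lambda_j:=C^\top e_j$. Then
\[
\sum_{i<j}\frac{\langle y_{ij},h_{ij}\rangle}{\omega_{ij}}=\sum_{i<j}\langle\lambda_j-\lambda_i,h_{ij}\rangle=-\sum_j\Big\langle\lambda_j,\sum_{i\neq j}h_{ij}\Big\rangle=0,
\]
where the middle equality regroups the sum by nodes using antisymmetry. This is discrete summation by parts, and the sign bookkeeping in the regrouping is what I expect to need the most care. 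It shows the cross term vanishes, so $y$ is optimal.

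Finally, undoing the change of variables gives, for $j\neq i$,
\[
W_i^\star e_j=\frac{y_{ij}}{p_i}=\frac{\omega_{ij}}{p_i}C^\top(e_j-e_i)=\frac{p_j}{p_i+p_j}C^\top(e_j-e_i),
\]
together with the fixed diagonal columns $W_i^\star e_i=\tfrac12(e_i-p)$. This is the claimed formula.
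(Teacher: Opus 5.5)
Your proof is correct and is essentially the paper's argument in flow language: your $y_{ij}=p_iW_ie_j$ with cost $\|y_{ij}\|^2/\omega_{ij}$ is exactly the paper's weighted norm $\tfrac{p_i(p_i+p_j)}{p_j}\|W_ie_j\|^2$ on the coordinates $(W_ie_j)_{i<j}$, your potential flow $\omega_{ij}C^\top(e_j-e_i)$ is $\mathcal A^\ast C$, and your orthogonality check is the statement $\operatorname{range}(\mathcal A^\ast)=\ker(\mathcal A)^\perp$ with normal operator $G$. One harmless slip: the summation by parts in Step 4 actually gives $+\sum_j\langle\lambda_j,\sum_{i\neq j}h_{ij}\rangle$ rather than $-$ (check $K=2$), which is zero for divergence-free $h$, so the conclusion stands.
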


\begin{proof}
By Lemma~\ref{lem:categorical-quadratic-exactness}, quadratic
exactness imposes \eqref{eq:categorical-quadratic-constraints}.
The first relation fixes the diagonal columns, while the second
identifies the two off-diagonal columns associated with each pair
$i<j$. We therefore use $(W_i e_j)_{i<j}$ as coordinates for the
remaining degrees of freedom and equip them with the inner product
\[
    \langle W,V\rangle_{\mathcal H}
    :=
    \sum_{i<j}
    \frac{p_i(p_i+p_j)}{p_j}
    (W_i e_j)^\top(V_i e_j).
\]
Indeed, using
$p_jW_j e_i=-p_iW_i e_j$, the original objective reduces to
\begin{equation}
    \mathbb E\|W(Z)\|_F^2
    =
    \mathrm{const}
    +
    \sum_{i<j}
    \frac{p_i(p_i+p_j)}{p_j}\|W_i e_j\|^2
    =
    \mathrm{const}+\|W\|_{\mathcal H}^2 .
    \label{eq:categorical-norm-reduction}
\end{equation}

The remaining exactness constraints define a linear map
$\mathcal A:\mathcal H\to\mathbb R^{K\times K}$ by
\begin{equation}
    e_j^\top\mathcal AW
    :=
    \left(
        \sum_{i<j}p_iW_i e_j
        -
        \sum_{i>j}p_jW_j e_i
    \right)^\top .
    \label{eq:categorical-exactness-operator}
\end{equation}
After subtracting the fixed diagonal contribution in
\eqref{eq:categorical-quadratic-constraints}, quadratic exactness is
$\mathcal AW=\frac12J$.

For $C\in\mathbb R^{K\times K}$, the adjoint is determined by
\begin{align}
    \langle \mathcal AW,C\rangle_F
    &=
    \sum_{i<j}
    p_i(W_i e_j)^\top C^\top(e_j-e_i),
    \nonumber\\
    (\mathcal A^\ast C)_i e_j
    &=
    \frac{p_j}{p_i+p_j}
    C^\top(e_j-e_i),
    \qquad i<j .
    \label{eq:categorical-adjoint}
\end{align}
Consequently, the normal operator is
\begin{equation}
    e_j^\top\mathcal A\mathcal A^\ast C
    =
    \sum_{i\neq j}
    \omega_{ij}(e_j-e_i)^\top C
    =
    e_j^\top GC,
    \qquad
    \mathcal A\mathcal A^\ast C=GC .
    \label{eq:categorical-normal-operator}
\end{equation}

The minimum-norm solution lies in
$\operatorname{range}(\mathcal A^\ast)
=\ker(\mathcal A)^\perp$.
Hence $W^\star=\mathcal A^\ast C$, where the coefficient matrix
satisfies the normal system
\[
    GC=\frac12J .
\]
Equation~\eqref{eq:categorical-adjoint} gives the off-diagonal columns displayed in Proposition~\ref{prop:categorical-least-norm}, and the first constraint in
\eqref{eq:categorical-quadratic-constraints} gives the diagonal
columns. Since $G\mathbf 1=0$ and $J\mathbf 1=0$, one may take
$C=\frac12G^\dagger J$; adding $\mathbf 1a^\top$ to $C$ leaves
$C^\top(e_j-e_i)$ unchanged. Uniqueness follows from uniqueness of
the minimum-norm element of the feasible affine subspace.
\end{proof}

\begin{corollary}[Relation to Hooper and Shekhovtsov]
\label{cor:categorical-hooper}
The least-norm quadratic-exact estimator in
Proposition~\ref{prop:categorical-least-norm}
coincides with the $\tau\to\infty$ categorical minimum-variance estimator of \citet{hooperetal2026}.
\end{corollary}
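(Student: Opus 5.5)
The plan is to prove the corollary by an identification-of-optimization-problems argument, not by comparing closed forms entry by entry. Proposition~\ref{prop:categorical-least-norm} already shows that the least-norm quadratic-exact weight is the \emph{unique} minimizer of the strictly convex quadratic $\mathbb E\|W(Z)\|_F^2$ over the affine set cut out by \eqref{eq:categorical-quadratic-constraints}. It therefore suffices to show that the $\tau\to\infty$ estimator of \citet{hooperetal2026} solves the same problem.

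\textbf{Step 1 (feasible sets agree).} Recall their axioms: a same-path weight acting on $\nabla f$ at the hard one-hot sample, together with second-order correctness. I would check that their second-order correctness condition, stated for linear-plus-quadratic test functions, is exactly the condition in \eqref{eq:categorical-quadratic-general}. Lemma~\ref{lem:categorical-quadratic-exactness} then identifies their admissible set with ours.

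\textbf{Step 2 (objectives agree up to a constant).} Their criterion is a variance of the weight, $\mathbb E\|W(Z)-\mathbb E W(Z)\|_F^2$. The last constraint in \eqref{eq:categorical-quadratic-constraints} fixes $\mathbb E W(Z)=\sum_i p_iW_i=J$ on the feasible set. Hence their variance equals $\mathbb E\|W(Z)\|_F^2-\|J\|_F^2$, and the two problems share the same minimizer.

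\textbf{Step 3 (the limit, which I expect to be the main obstacle).} At finite $\tau$, their objective is presumably a combination of the second-order variance criterion with a term pulling toward ST, whose weight vanishes as $\tau\to\infty$; equivalently, the constraint set relaxes in a way that disappears in the limit. I would write their finite-$\tau$ minimizer $W^{(\tau)}$ as the minimizer of a strictly convex quadratic $Q_\tau$ over an affine set. I would then show that $Q_\tau$ and the affine data converge to $\mathbb E\|W\|_F^2$ and to the constraints of Lemma~\ref{lem:categorical-quadratic-exactness}. For this I would use the finite-dimensional argument: the minimizers are given by KKT/normal equations of the form $GC=\tfrac12J$ with $\tau$-dependent coefficients, whose solutions depend continuously on $\tau$ on the range of $G$. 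Passing to the limit then gives $W^{(\tau)}\to W^\star$.

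\textbf{Fallback.} If their estimator is instead given only in closed form at $\tau\to\infty$, I would verify coincidence directly. I would check the diagonal columns against $\tfrac12(e_i-p)$, and the off-diagonal columns against $\frac{p_j}{p_i+p_j}C^\top(e_j-e_i)$ with $C=\tfrac12G^\dagger J$. Their expression is presumably also derived from a Laplacian-type normal system. The free shift $C\mapsto C+\mathbf 1a^\top$ is harmless, since it leaves $C^\top(e_j-e_i)$ unchanged.
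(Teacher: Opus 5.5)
Your Steps~1--2 match the paper's reasoning, though the paper writes no separate proof. It treats the corollary as immediate from the uniqueness clause of Proposition~\ref{prop:categorical-least-norm}: the constraint $\sum_i p_iW_i=J$ fixes the mean on the quadratic-exact class, so weight variance and $\mathbb E\|W(Z)\|_F^2$ differ only by the constant $\|J\|_F^2$. Your Step~3 limit argument and the closed-form fallback are needed only if the hard estimator of \citet{hooperetal2026} is available solely as a $\tau\to\infty$ limit, rather than directly as the minimum-weight-variance quadratic-exact weight.
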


\paragraph{Relation to the 1D MVE of Hooper and Shekhovtsov.}
Let $Z$ have a finite scalar support, with
$m_1=\E_\theta[Z]$, $m_2=\E_\theta[Z^2]$, and
$v=m_2-m_1^2>0$.
Then the degree-$2$ least-norm weight coincides with the hard
quadratically-unbiased 1D MVE of \citet{hooperetal2026}.

The degree-2 least-norm construction gives $w_2^\star(z)=a+bz$, with
\[
a+bm_1=\dot m_1,
\qquad
am_1+bm_2=\frac12\dot m_2 .
\]
Hence
\[
w_2^\star(z)
=
\frac{m_2-zm_1}{v}\,\dot m_1
+
\frac{z-m_1}{2v}\,\dot m_2 ,
\]
which corresponds to \citet[Prop.~C.3]{hooperetal2026}.
The correspondence is specific to order 2. For a scalar finite-support law, the construction extends to degree-$D$ exactness whenever the support contains at least $D$ distinct points.

\section{Theory section Proofs}
\label{app:proofs}

Throughout this appendix, a dot denotes differentiation with respect to the
parameter coordinate under consideration.

\subsection{Proof of Proposition~\ref{prop:min-variance}}
\label{app:min-variance-proof}

\begin{proof}
Let $v_D(Z)=(1,Z,\ldots,Z^{D-1})^\top$ and
$r=w-w_D^\star$. Since $w$ and $w_D^\star$ satisfy the same exactness
constraints, $\E_\theta[r(Z)v_D(Z)]=0$, hence
$r\perp V_D$. Since $w_D^\star\in V_D$,
\begin{equation}
\begin{aligned}
\E_\theta[w(Z)^2]
&=
\E_\theta[w_D^\star(Z)^2]+\E_\theta[r(Z)^2]
=
b_D^\top G_D^{-1}b_D+\E_\theta[r(Z)^2], \\
\operatorname{Var}_\theta[w(Z)]
&=
\operatorname{Var}_\theta[w_D^\star(Z)]
+\E_\theta[r(Z)^2].
\end{aligned}
\label{eq:min-variance-proof}
\end{equation}
The second identity follows because degree-$D$ exactness fixes
$\E_\theta[w(Z)]$. Equality in either objective requires $r=0$ almost
surely, proving uniqueness.
\end{proof}

\subsection{Proof of Proposition~\ref{prop:smooth-bias}}
\label{app:bias-proof}
Assume $f\in C^{D+1}$ with bounded $f^{(D+1)}$,
$m_{2D+2}(a)<\infty$, $w,s_\theta\in L^2(p_\theta)$,
and that the score identity holds for $f$ and degree-$D$ polynomials.
Let $m_k(a)=\E_\theta[|Z-a|^k]$. The constants in
Proposition~\ref{prop:smooth-bias} are
\begin{equation}
C_{D,w}(a)
=
\frac{m_{2D}(a)^{1/2}}{D!},
\qquad
C_{D,s}(a)
=
\frac{m_{2D+2}(a)^{1/2}}{(D+1)!}.
\label{eq:smooth-bias-constants}
\end{equation}

\begin{proof}
Fix an expansion point $a$ independent of $\theta$, and let $T_D$ be the
degree-$D$ Taylor polynomial of $f$ about $a$.
Degree-$D$ exactness annihilates $T_D$, so the score identity reduces the
bias to
\begin{equation}
B_w(f)
=
\E_\theta\!\left[
w(Z)\bigl(f'(Z)-T_D'(Z)\bigr)
-
s_\theta(Z)\bigl(f(Z)-T_D(Z)\bigr)
\right].
\label{eq:smooth-bias-remainder}
\end{equation}
Taylor's theorem gives
$|f'(z)-T_D'(z)|
\le
\lVert f^{(D+1)}\rVert_\infty |z-a|^D/D!$
and
$|f(z)-T_D(z)|
\le
\lVert f^{(D+1)}\rVert_\infty |z-a|^{D+1}/(D+1)!$.
Cauchy--Schwarz therefore yields
\begin{equation}
|B_w(f)|
\le
\lVert f^{(D+1)}\rVert_\infty
\left[
\frac{m_{2D}(a)^{1/2}}{D!}\lVert w\rVert_2
+
\frac{m_{2D+2}(a)^{1/2}}{(D+1)!}\lVert s_\theta\rVert_2
\right].
\label{eq:smooth-bias-proof}
\end{equation}
\end{proof}

\subsection{Order-$2$ ReinMax in a scalar natural exponential family}
\label{app:reinmax-nef}

The finite-support result isolates the component introduced by ReinMax
before specializing to Poisson.

\begin{proposition}[ReinMax excess component]
\label{prop:reinmax-nef}
Let
$p_\eta(z)=h(z)\exp\{\eta z-A(\eta)\}$
be a scalar natural exponential family represented categorically over a
finite support. Write $Y=Z-\mu$,
$V=\E_\eta[Y^2]$, $M_3=\E_\eta[Y^3]$, and
$M_4=\E_\eta[Y^4]$. At $\tau=1$,
\begin{equation}
\begin{aligned}
w_{\mathrm{RM}}(Z)
&=
\frac12(V+Y^2)
=
\underbrace{
V+\frac{M_3}{2V}Y
}_{w_2^\star(Z)}
+
\underbrace{
\frac12
\left(
Y^2-\frac{M_3}{V}Y-V
\right)
}_{r_{\mathrm{RM}}(Z)}, \\
r_{\mathrm{RM}}
&\perp\operatorname{span}\{1,Z\},
\qquad
\operatorname{Var}_\eta[w_{\mathrm{RM}}]
-
\operatorname{Var}_\eta[w_2^\star]
=
\frac14
\left(
M_4-V^2-\frac{M_3^2}{V}
\right).
\end{aligned}
\label{eq:nef-reinmax-decomposition}
\end{equation}
The variance gap is strictly positive whenever the law has at least three
support points of positive probability.
\end{proposition}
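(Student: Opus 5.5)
The plan is to reduce categorical ReinMax to a scalar weight, check it against the exactness constraints of \Cref{eq:finite-order-pathwise}, and then read off the variance gap from \Cref{prop:min-variance}.

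\textbf{Step 1: reduce the categorical weight to a scalar one.} Label the finite support points $z_1,\dots,z_K$ and collect them in $c=(z_1,\ldots,z_K)^\top$. Represent $Z$ categorically as $e_i$, so the scalar is $Z=c^\top e_i$. The category logits are $\eta z_i+\log h(z_i)-A(\eta)$, so their derivative with respect to the natural parameter is $c-\dot A\mathbf 1$. Since $J\mathbf 1=0$, the $\mathbf 1$ component drops out and the effective tangent direction is $c$.

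A downstream $f$ of the scalar has categorical gradient $c\,f'(Z)$. Applying the $\tau=1$ ReinMax matrix from Proposition~\ref{prop:categorical-reinmax} and contracting with the tangent direction gives the scalar weight
\[
w_{\mathrm{RM}}(e_i)=c^\top W_i^{\mathrm{RM}}c=\tfrac12\left[c^\top Jc+\bigl(c^\top(e_i-p)\bigr)^2\right].
\]
Here $c^\top Jc=\Var_\eta[Z]=V$ and $c^\top(e_i-p)=z_i-\mu=Y$. This yields $w_{\mathrm{RM}}=\tfrac12(V+Y^2)$. Degree-2 exactness is inherited from Lemma~\ref{lem:categorical-quadratic-exactness}, since quadratics in the scalar are quadratics in the one-hot vector.

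\textbf{Step 2: identify $w_2^\star$.} By Proposition~\ref{prop:min-norm-solution}, $w_2^\star$ is the unique element of $\operatorname{span}\{1,Z\}$ meeting the two constraints. For a natural exponential family the cumulant identities give
\[
\dot m_1=V,\qquad \tfrac12\dot m_2=\tfrac12\operatorname{Cov}(Z^2,Z)=\mu V+\tfrac12M_3.
\]
I then check that the candidate $V+\tfrac{M_3}{2V}Y$ satisfies both: $\E[w]=V$, and $\E[wZ]=\E[wY]+\mu V=\tfrac12M_3+\mu V$. Uniqueness then identifies it as $w_2^\star$.

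The remainder $r_{\mathrm{RM}}=w_{\mathrm{RM}}-w_2^\star$ matches the displayed expression by direct subtraction. Orthogonality to $\operatorname{span}\{1,Z\}$ needs $\E[r]=\tfrac12(V-0-V)=0$ and $\E[rY]=\tfrac12(M_3-M_3-0)=0$. Alternatively, orthogonality follows automatically from \Cref{prop:min-variance}, because both weights are degree-2 exact.

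\textbf{Step 3: compute the gap and prove positivity.} \Cref{prop:min-variance} gives the gap as $\|r_{\mathrm{RM}}\|_2^2$. Expanding
\[
\tfrac14\,\E\!\left[\left(Y^2-\tfrac{M_3}{V}Y-V\right)^2\right]
\]
and using $\E[Y]=0$ gives
\[
\tfrac14\left(M_4+\tfrac{M_3^2}{V^2}V+V^2-2\tfrac{M_3^2}{V}-2V^2\right)=\tfrac14\left(M_4-V^2-\tfrac{M_3^2}{V}\right).
\]

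For strict positivity, $r_{\mathrm{RM}}$ is a monic-up-to-$\tfrac12$ quadratic in $Y$, so it has at most two real roots. Its squared norm vanishes only if $r_{\mathrm{RM}}$ is zero at every support point of positive mass. With at least three such points this is impossible, so the gap is strictly positive.

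The main obstacle I anticipate is Step 1. One must justify carefully that the natural-parameter tangent is $c$ modulo $\mathbf 1$, that $J$ annihilates the $\mathbf 1$ component, and that the one-hot gradient of a scalar function is exactly $c f'(Z)$ at one-hot points. The remaining steps are routine moment algebra.
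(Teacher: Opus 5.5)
Your proposal is correct and follows essentially the same route as the paper: contract the $\tau=1$ ReinMax matrix $\tfrac12[J+(C-p)(C-p)^\top]$ with the support vector to get $\tfrac12(V+Y^2)$, identify $w_2^\star$ from the NEF identities ($\dot m_1=V$, $\dot m_2=2\mu V+M_3$), check $r_{\mathrm{RM}}\perp\operatorname{span}\{1,Z\}$, read off the gap from \Cref{prop:min-variance}, and use the at-most-two-roots argument for strict positivity. Your extra care about the normalizer direction, which the paper skips, is fine. Note, however, that what removes the $\mathbf 1$ component is $W_i^{\mathrm{RM}}\mathbf 1=0$, and this uses $(e_i-p)^\top\mathbf 1=0$ in addition to $J\mathbf 1=0$.
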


\begin{proof}
Let $C\sim\operatorname{Cat}(p)$, let $x$ contain the support values, and
write $Z=x^\top C$. With
$J(p)=\operatorname{diag}(p)-pp^\top$, the $\tau=1$ ReinMax matrix weight reduces to
\begin{equation}
A_C
=
\frac12
\left[
J(p)+(C-p)(C-p)^\top
\right],
\qquad
x^\top A_Cx
=
\frac12(V+Y^2).
\label{eq:reinmax-scalar-weight}
\end{equation}
For a natural exponential family,
$\dot\mu=V$ and $\dot V=M_3$, so the native order-$2$ weight is
$w_2^\star=V+(M_3/(2V))Y$. The residual in
\eqref{eq:nef-reinmax-decomposition} satisfies
$\E_\eta[r_{\mathrm{RM}}]
=
\E_\eta[Yr_{\mathrm{RM}}]=0$, and Proposition~\ref{prop:min-variance}
gives
\begin{equation}
\operatorname{Var}_\eta[w_{\mathrm{RM}}]
-
\operatorname{Var}_\eta[w_2^\star]
=
\lVert r_{\mathrm{RM}}\rVert_2^2
=
\frac14
\left(
M_4-V^2-\frac{M_3^2}{V}
\right).
\label{eq:nef-reinmax-gap}
\end{equation}
The gap can vanish only if
$Y^2-(M_3/V)Y-V=0$ almost surely; a nonzero quadratic cannot vanish on
three support points.
\end{proof}

\subsection{Poisson ReinMax under increasing categorical truncation}
\label{app:reinmax-poisson}

The Poisson result follows by letting the categorical truncation grow to infinite support.

\begin{proposition}[Poisson categorical limit]
\label{prop:poisson-reinmax-limit}
Let $Z_K$ have the Poisson law truncated to
$\{0,\ldots,K\}$ and renormalized, with mean $\mu_K$ and variance $V_K$.
Under categorical embedding with logits
$\ell_k=k\eta-\log(k!)$, $\eta=\log\lambda$, the $\tau=1$ ReinMax weight is
\[
w_{\mathrm{RM},K}(Z_K)
=
\frac12
\left[
V_K+(Z_K-\mu_K)^2
\right].
\]
Extend $w_{\mathrm{RM},K}(z)=\tfrac12[V_K+(z-\mu_K)^2]$ to $z\in\mathbb N_0$.
Then $w_{\mathrm{RM},K}\to w_{\mathrm{RM}}$ in $L^2(p_\lambda)$, where
$w_{\mathrm{RM}}(z)=\tfrac12[\lambda+(z-\lambda)^2]$.
The limiting weight satisfies
\begin{equation}
\begin{aligned}
w_{\mathrm{RM}}(Z)
&=
\underbrace{\frac{Z+\lambda}{2}}_{w_2^\star(Z)}
+
\underbrace{
\frac12
\left[
(Z-\lambda)^2-(Z-\lambda)-\lambda
\right]
}_{r_{\mathrm{RM}}(Z)}, \\
r_{\mathrm{RM}}
&\perp\operatorname{span}\{1,Z\},
\qquad
\lVert r_{\mathrm{RM}}\rVert_2^2
=
\frac{\lambda^2}{2}.
\end{aligned}
\label{eq:poisson-reinmax-limit-result}
\end{equation}
\end{proposition}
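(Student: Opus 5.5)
The argument has three steps: apply the finite-support result (Proposition~\ref{prop:reinmax-nef}) to each truncation, pass to the limit $K\to\infty$ using Poisson moments, and then check the decomposition and the norm directly under $\Pois(\lambda)$.

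\textbf{Step 1: the form of the weight for fixed $K$.} The Poisson law truncated to $\{0,\ldots,K\}$ and renormalized is a scalar natural exponential family on finite support. Its base measure is $h(z)=1/z!$ and its natural parameter is $\eta$, since the logits are $\ell_k=k\eta-\log k!$. With support vector $x=(0,1,\ldots,K)$, the computation in \eqref{eq:reinmax-scalar-weight} therefore applies verbatim. It gives $x^\top A_C x=\tfrac12[V_K+(Z_K-\mu_K)^2]$, which is exactly the stated $w_{\mathrm{RM},K}$.

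\textbf{Step 2: convergence in $L^2(p_\lambda)$.} I will show $\mu_K\to\lambda$ and $V_K\to\lambda$. The truncated moments are $\sum_{k\le K}k^j\lambda^k/k!$ divided by $\sum_{k\le K}\lambda^k/k!$. The numerators converge to the full Poisson sums by dominated (or monotone) convergence, since every Poisson moment is finite, and the normalizer converges to $e^\lambda$.

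The extended weight is a quadratic polynomial in $z$, and its coefficients $\tfrac12$, $-\mu_K$ and $\tfrac12(V_K+\mu_K^2)$ are continuous in $(\mu_K,V_K)$. So
\[
w_{\mathrm{RM},K}-w_{\mathrm{RM}}=\alpha_K+\beta_K z,
\]
where $\alpha_K,\beta_K\to0$. Its $L^2(p_\lambda)$ norm is bounded by $|\alpha_K|+|\beta_K|\,\|Z\|_2\to0$.

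\textbf{Step 3: the decomposition and the norm.} Write $Y=Z-\lambda$. The algebraic identity
\[
\tfrac12[\lambda+Y^2]=\tfrac12(Z+\lambda)+\tfrac12[Y^2-Y-\lambda]
\]
follows from $\tfrac12(Z+\lambda)=\lambda+\tfrac12 Y$.

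Orthogonality uses the Poisson central moments $\E Y=0$, $\E Y^2=\lambda$, $\E Y^3=\lambda$ and $\E Y^4=3\lambda^2+\lambda$. These give $\E r_{\mathrm{RM}}=\tfrac12(\lambda-0-\lambda)=0$ and $\E[Yr_{\mathrm{RM}}]=\tfrac12(\lambda-\lambda-0)=0$, so $r_{\mathrm{RM}}\perp\operatorname{span}\{1,Z\}$.

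For the norm, the formula from Proposition~\ref{prop:reinmax-nef}, $\tfrac14(M_4-V^2-M_3^2/V)$, evaluates to $\tfrac14(3\lambda^2+\lambda-\lambda^2-\lambda)=\lambda^2/2$. The same value also follows by expanding $\E[(Y^2-Y-\lambda)^2]/4$ directly, which avoids appealing to the finite-support hypothesis.

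Finally, $w_2^\star$ is degree-2 exact by \eqref{eq:poisson-weights}, and $r_{\mathrm{RM}}$ is orthogonal to $V_2$. Hence $w_{\mathrm{RM}}$ is degree-2 exact, and Proposition~\ref{prop:min-variance} gives the variance gap $\lambda^2/2$.

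\textbf{Main obstacle.} The only delicate point is Step 2: justifying that the truncated moments converge, and that $L^2$ convergence follows from convergence of the polynomial coefficients. Both reduce to finiteness of the Poisson moments up to order four, so I expect this step to be routine.
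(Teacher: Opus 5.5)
Your proof is correct and follows the paper's argument step for step. You apply Proposition~\ref{prop:reinmax-nef} to each truncation, use $\mu_K,V_K\to\lambda$ to get convergence of the polynomial coefficients, and then verify orthogonality and $\lVert r_{\mathrm{RM}}\rVert_2^2=\lambda^2/2$ from the Poisson central moments. Your observation that the $z^2$ coefficients cancel, so that $w_{\mathrm{RM},K}-w_{\mathrm{RM}}$ is affine and needs only second moments, slightly sharpens the paper's appeal to the finite fourth moment at that step.
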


\begin{proof}
Proposition~\ref{prop:reinmax-nef} applied to the truncated Poisson law
gives
$w_{\mathrm{RM},K}
=
\tfrac12[V_K+(Z_K-\mu_K)^2]$.
The truncated moments give $\mu_K\to\lambda$ and $V_K\to\lambda$.
Hence the coefficients of the quadratic weight functions converge; since the Poisson law has a finite fourth moment, the convergence holds in $L^2(p_\lambda)$.

For $Y=Z-\lambda$, the limiting decomposition in
\eqref{eq:poisson-reinmax-limit-result} has
$\E[Y]=0$, $\E[Y^2]=\E[Y^3]=\lambda$, and
$\E[Y^4]=\lambda+3\lambda^2$. Therefore
\begin{equation}
\E[r_{\mathrm{RM}}]
=
\E[Zr_{\mathrm{RM}}]
=
0,
\qquad
\lVert r_{\mathrm{RM}}\rVert_2^2
=
\frac14
\E[(Y^2-Y-\lambda)^2]
=
\frac{\lambda^2}{2}.
\label{eq:poisson-reinmax-residual}
\end{equation}
\end{proof}
\section{Additional closed-form estimators}
\label{app:estimators}

For a scalar discrete variable $X$ and parameter coordinate $\theta$, write the order-$m$ same-path estimator as
\begin{equation}
    \widehat{\partial_\theta \mathbb{E}[f(X)]}
    = v_\theta^{(m)}(X)\,f'(X).
\end{equation}
Here order $m$ means exactness for every polynomial $f$ of degree at most $m$.
The formulas below are the minimum-$L^2(p_\theta)$ solutions in the stated parameter coordinates: order 2 is affine, $v^{(2)}(x)=a_0+a_1x$, and order 3 is quadratic, $v^{(3)}(x)=a_0+a_1x+a_2x^2$.

\subsection{Poisson: order 4}
\label{app:poisson-order4}
For $Z\sim\operatorname{Pois}(\lambda)$ with $\eta=\log\lambda$, order-$4$
exactness gives
\[
    w_4^\star(z)=v_4(z)^\top G_4^{-1}b_4,
    \qquad
    v_4(z)=(1,z,z^2,z^3)^\top .
\]
Using moments through order 6 and $(b_4)_d=d^{-1}\partial_\eta\E[Z^d]$, the $4\times4$ system gives
\[
    G_4^{-1}b_4
    =
    \begin{pmatrix}
        3\lambda/8\\[1mm]
        (19\lambda^2+5\lambda+2)/(24\lambda^2)\\[1mm]
        -(5\lambda+3)/(24\lambda^2)\\[1mm]
        1/(24\lambda^2)
    \end{pmatrix}.
\]
The resulting weight and variance are
\[
    \boxed{
    w_4^\star(Z)
    =
    \frac{3\lambda}{8}
    +\frac{19\lambda^2+5\lambda+2}{24\lambda^2}Z
    -\frac{5\lambda+3}{24\lambda^2}Z^2
    +\frac{1}{24\lambda^2}Z^3
    \qquad
    \operatorname{Var}(w_4^\star)=\frac{\lambda}{4}
    + \frac{1}{72} + \frac{1}{96\lambda}
    }
\]

\subsection{Binomial: $\eta=\operatorname{logit}p$}
\label{subsec:binomial-logit}

Let
\begin{equation}
    X\sim\operatorname{Binomial}(n,p),
    \qquad q=1-p,
    \qquad \eta=\operatorname{logit}p.
\end{equation}
The order-2 estimator is
\begin{equation}
    \boxed{
    v_\eta^{(2)}(x)
    =
    \frac{np}{2}
    +
    \frac{1-2p}{2}\,x.}
\end{equation}
For $n\ge 2$, the order-3 estimator is
\begin{equation}
    \boxed{
    \begin{aligned}
    v_\eta^{(3)}(x)
    ={}&
    \frac{np(5-6p)}{12q}
    \\
    &+
    \left[
        \frac{1-2p}{2}
        +
        \frac{2np+1-2p}{12(n-1)pq}
    \right]x
    \\
    &-
    \frac{1}{12(n-1)pq}\,x^2.
    \end{aligned}}
\end{equation}
The corresponding raw-$p$ weight follows from parameterization equivariance:
\begin{equation}
    v_p^{(m)}(x)=\frac{v_\eta^{(m)}(x)}{pq}.
\end{equation}

\subsection{Negative binomial: $\eta=\operatorname{logit}p$}
\label{subsec:nb-logit}

We use the convention
\begin{equation}
    \Pr(X=x)
    =
    \frac{\Gamma(r+x)}{\Gamma(r)x!}
    (1-p)^r p^x,
    \qquad
    q=1-p,
\end{equation}
so that $\mathbb{E}[X]=rp/q$. For $\eta=\operatorname{logit}p$, the order-2 estimator is
\begin{equation}
    \boxed{
    v_\eta^{(2)}(x)
    =
    \frac{rp}{2}
    +
    \frac{1+p}{2}\,x.}
\end{equation}
The order-3 estimator is
\begin{equation}
    \boxed{
    \begin{aligned}
    v_\eta^{(3)}(x)
    ={}&
    \frac{rp(p+5)}{12}
    \\
    &+
    \left[
        \frac{1+p}{2}
        +
        \frac{q^2(2rp+1+p)}{12p(r+1)}
    \right]x
    \\
    &-
    \frac{q^3}{12p(r+1)}\,x^2.
    \end{aligned}}
\end{equation}
The corresponding raw-$p$ weight is
\begin{equation}
    v_p^{(m)}(x)=\frac{v_\eta^{(m)}(x)}{pq}.
\end{equation}

\subsection{Negative binomial: $\rho=\log r$}
\label{subsec:nb-logr}

Under the same negative-binomial convention, let
\begin{equation}
    \rho=\log r.
\end{equation}
The order-2 estimator is
\begin{equation}
    \boxed{
    v_\rho^{(2)}(x)
    =
    \frac{rp}{2q}
    +
    \frac{1}{2}\,x.}
\end{equation}
The order-3 estimator is
\begin{equation}
    \boxed{
    \begin{aligned}
    v_\rho^{(3)}(x)
    ={}&
    \frac{rp(5-p)}{12q}
    \\
    &+
    \left[
        \frac{1}{2}
        +
        \frac{(1+p)(2rp+1+p)}{12p(r+1)}
    \right]x
    \\
    &-
    \frac{q(1+p)}{12p(r+1)}\,x^2.
    \end{aligned}}
\end{equation}
The corresponding raw-$r$ weight is
\begin{equation}
    v_r^{(m)}(x)=\frac{v_\rho^{(m)}(x)}{r}.
\end{equation}

\subsection{Beta-binomial: $\eta=\operatorname{logit}\pi$}
\label{subsec:betabin-logit}

Let
\begin{equation}
    X\sim\operatorname{BetaBinomial}(n,\alpha,\beta),
    \qquad
    \alpha=\kappa\pi,
    \qquad
    \beta=\kappa(1-\pi),
\end{equation}
and write
\begin{equation}
    q=1-\pi,
    \qquad
    \eta=\operatorname{logit}\pi.
\end{equation}
The order-2 estimator is
\begin{equation}
    \boxed{
    v_\eta^{(2)}(x)
    =
    \frac{n\pi}{2}
    +
    \frac{1-2\pi}{2}\,x.}
\end{equation}

For order 3, define
\begin{align}
    S
    &:=
    \frac{
        2\kappa n\pi-2\kappa\pi+\kappa+2n
    }{\kappa+2},
    \\
    T
    &:=
    \frac{
        \kappa n\pi(n-1)(\kappa\pi+1)
    }{(\kappa+1)(\kappa+2)},
\end{align}
and
\begin{equation}
    C_\eta
    :=
    -
    \frac{
        (\kappa+1)(\kappa+2)(\kappa+3)(\kappa+2n)
    }{
        12\kappa(n-1)
        (\kappa\pi+1)
        (\kappa q+1)
        (\kappa+n+1)
    }.
\end{equation}
Then
\begin{equation}
    \boxed{
    \begin{aligned}
    v_\eta^{(3)}(x)
    ={}&
    \left(\frac{n\pi}{2}+C_\eta T\right)
    \\
    &+
    \left(\frac{1-2\pi}{2}-C_\eta S\right)x
    \\
    &+
    C_\eta x^2.
    \end{aligned}}
\end{equation}
The corresponding raw-$\pi$ weight is
\begin{equation}
    v_\pi^{(m)}(x)
    =
    \frac{v_\eta^{(m)}(x)}{\pi(1-\pi)}.
\end{equation}

\subsection{Beta-binomial: $\rho=\log\kappa$}
\label{subsec:betabin-logkappa}

Under the same mean--concentration parameterization, let
\begin{equation}
    \rho=\log\kappa.
\end{equation}
Retain the definitions of $q$, $S$, and $T$ from
Section~\ref{subsec:betabin-logit}, and define
\begin{equation}
    D
    :=
    \frac{
        \kappa(n-1)
    }{
        2(\kappa+1)(\kappa+n)
    }.
\end{equation}
The order-2 estimator is
\begin{equation}
    \boxed{
    v_\rho^{(2)}(x)
    =
    Dn\pi-Dx.}
\end{equation}
For order 3, define
\begin{equation}
    C_\rho
    :=
    \frac{
        \kappa(\kappa+3)(2\pi-1)(3\kappa+2n+2)
    }{
        12(\kappa+n)
        (\kappa\pi+1)
        (\kappa q+1)
        (\kappa+n+1)
    }.
\end{equation}
Then
\begin{equation}
    \boxed{
    \begin{aligned}
    v_\rho^{(3)}(x)
    ={}&
    \left(Dn\pi+C_\rho T\right)
    \\
    &+
    \left(-D-C_\rho S\right)x
    \\
    &+
    C_\rho x^2.
    \end{aligned}}
\end{equation}
The corresponding raw-$\kappa$ weight is
\begin{equation}
    v_\kappa^{(m)}(x)
    =
    \frac{v_\rho^{(m)}(x)}{\kappa}.
\end{equation}

\section{Two-Parameter Straight-Through Implementation}
\label{sec:two-parameter-st}

Consider a hard scalar sample $Z_{\mathrm{hard}}\sim p_{a,b}$ with optimized coordinates $a$ and $b$. Suppose the desired same-path estimators are
\begin{align}
    \partial_a\mathbb{E}[f(Z)]
    &\approx
    \mathbb{E}\!\left[u(Z)f'(Z)\right],
    \\
    \partial_b\mathbb{E}[f(Z)]
    &\approx
    \mathbb{E}\!\left[v(Z)f'(Z)\right].
\end{align}
Here $u$ and $v$ may be any minimum-norm weights above, expressed in the model's optimized coordinates $a$ and $b$.

Let $\operatorname{sg}(\cdot)$ denote stop-gradient. The corresponding straight-through surrogate is
\begin{equation}
    \boxed{
    Z_{\mathrm{st}}
    =
    \operatorname{sg}(Z_{\mathrm{hard}})
    +
    \operatorname{sg}(u(Z_{\mathrm{hard}}))
    \bigl(a-\operatorname{sg}(a)\bigr)
    +
    \operatorname{sg}(v(Z_{\mathrm{hard}}))
    \bigl(b-\operatorname{sg}(b)\bigr).}
\end{equation}

For a Negative-Binomial variable parameterized by
\begin{equation}
    a=\eta=\operatorname{logit}p,
    \qquad
    b=\rho=\log r,
\end{equation}
one evaluates
\begin{equation}
    u(Z_{\mathrm{hard}})=v_\eta^{(m)}(Z_{\mathrm{hard}}),
    \qquad
    v(Z_{\mathrm{hard}})=v_\rho^{(m)}(Z_{\mathrm{hard}}),
\end{equation}
and inserts both weights into the same straight-through expression.

In PyTorch pseudocode, the scalar two-parameter construction is
\begin{verbatim}
z_st = (
    z_hard.detach()
    + u.detach() * (a - a.detach())
    + v.detach() * (b - b.detach())
)
\end{verbatim}

\section{Additional Experimental Details}
\label{app:experiments}

Unless stated otherwise, means and standard deviations are computed over independently seeded runs, with seeds matched across estimator configurations. 
One additional detail on the tested method, is that for ReinMax, we take the largest rate of the batch, and calculate truncation using that rate and probability $p=0.999$.

\subsection{Linear Poisson VAE}
\label{app:linear-vae-details}
We follow the benchmark in Appendix G.1 of \citet{eat_cubic}.
EAT-cubic is tuned over the reported temperature grid $[0.02,0.05,0.2]$; $0.05$ performs best in our replication.
Across five independent matched seeds, order 2 obtains $-167.02\pm0.05$, EAT-cubic obtains $-167.11\pm0.06$ at temperature $0.05$ ($p=0.034$, two-sided Welch $t$-test), and the exact reference gives $-166.99\pm0.04$.
Because final ELBO saturates in this controlled benchmark, we use it primarily to validate exactness and gradient variance rather than as a headline performance comparison.

\subsection{Neural sequence model}
\label{app:poglm-details}
We follow the synthetic spike-train benchmark, temperature grids, and tuning protocol in Appendix G.2 of \citet{eat_cubic}, except for initialization.
In our implementation, the stated $N(0,0.5^2)$ initialization did not reproduce the reported ELBO range: with either one or five Monte Carlo samples, runs converged around $-427$ to $-429$.
The repository-default \texttt{Conv1d} initialization reproduced results near $-330.5$, so we used it for every estimator and held it fixed across methods.
Reported statistics use 30 independent matched seeds.

\subsection{Nonlinear Poisson and count VAEs}
\label{app:nonlinear-count-vae-details}
We reuse the experimental design of \citet{double_cv}, which follows earlier discrete-gradient control-variate studies \citep{arm,disarm,arsm}, and replace only the latent distribution with Poisson.
The prior is $\operatorname{Pois}(1.0)$: no proper uninformative Poisson prior exists, and rate $1.0$ corresponds to natural parameter $0$.
For the count-family comparison, we retain the same design but restrict the experiments to dynamically binarized MNIST.
Binomial and Negative-Binomial latents use $p=0.5$, while the sparse Poisson setup uses $\lambda=0.1$.
ReinMax temperatures are tuned over $[1.0,1.1,1.3,1.5]$ and EAT-cubic temperatures over $[0.02,0.05,0.2,0.5]$.
Both experiment families report five independent matched seeds per estimator configuration.

\subsection{Binomial support-size timing}
\label{app:binomial-timing-details}
The benchmark evaluates a batch of $200\times128$ Binomial latents on an NVIDIA L4.
We sweep $n\in\{4,8,16,32,64,128,256,512,1024\}$, corresponding to categorical support size $K=n+1$.
The comparison includes native order-2 and order-3 weights and the released 1D MVE implementation of \citet{hooperetal2026} at $\tau=10^6$.
Figure~\ref{fig:binom-mve-support-scaling} reports milliseconds per forward--backward evaluation.

\subsection{DEF experiment details}
\label{app:DEF-deets}
The DEF experiments use a reduced grid relative to the nonlinear-VAE experiments.
ReinMax temperatures are $\{1,1.3,1.5\}$, EAT-cubic temperatures are $\{0.02,0.05,0.2\}$, and learning rates are $\{3\times10^{-3},10^{-3},3\times10^{-4}\}$.
We use AdamW with batch size 64, train for 500 epochs on 20 Newsgroups and 100 epochs on RCV1, and report five independent matched seeds for each selected configuration.

We use amortized inference because storing document-specific latent variables for RCV1 would be memory-intensive.
The decoder follows the Gaussian-weight, softplus-link Poisson DEF of \citet{def}; its weights are point-estimated, with 30 supertopics and 100 topics.
The encoder is a two-layer neural network mirroring the decoder.
As in the original model, a $\operatorname{Pois}(0.1)$ prior encourages sparsity.

For 20 Newsgroups, we use the master-branch data from \url{https://github.com/akashgit/autoencoding_vi_for_topic_models}, also used by the AVITM paper.
The vocabulary has 1,995 terms, with 11,258 training and 7,487 test documents.
For RCV1, we use the stemmed token files from \url{http://www.ai.mit.edu/projects/jmlr/papers/volume5/lewis04a/a12-token-files/}, retain the 10,000 most frequent terms, pool the documents, and randomly split 794,414 for training and 10,000 for testing.

\subsection{ReinMax and the rate--distortion tradeoff}
\label{app:def-rate-distortion}
On 20 Newsgroups, ReinMax attains better held-out perplexity despite substantially worse train and test ELBO.
We therefore inspect local update directions in selected high-performing runs.

In Figure~\ref{fig:reinmax-bias}, our estimators and EAT-cubic closely track the near-exact reference in the displayed runs.
ReinMax is broadly aligned but places more weight on decreasing distortion, especially in the displayed 20 Newsgroups run, where it moves toward a higher-rate, lower-distortion solution.

Lower distortion improves reconstruction, whereas the ELBO also penalizes KL rate.
ReinMax's local bias therefore acts like a reweighting toward reconstruction.
Its perplexity advantage should not be interpreted as more accurate ELBO-gradient estimation; the displayed direction diagnostic points in the opposite direction.

\begin{figure}[t]
    \centering
    \includegraphics[width=\linewidth]{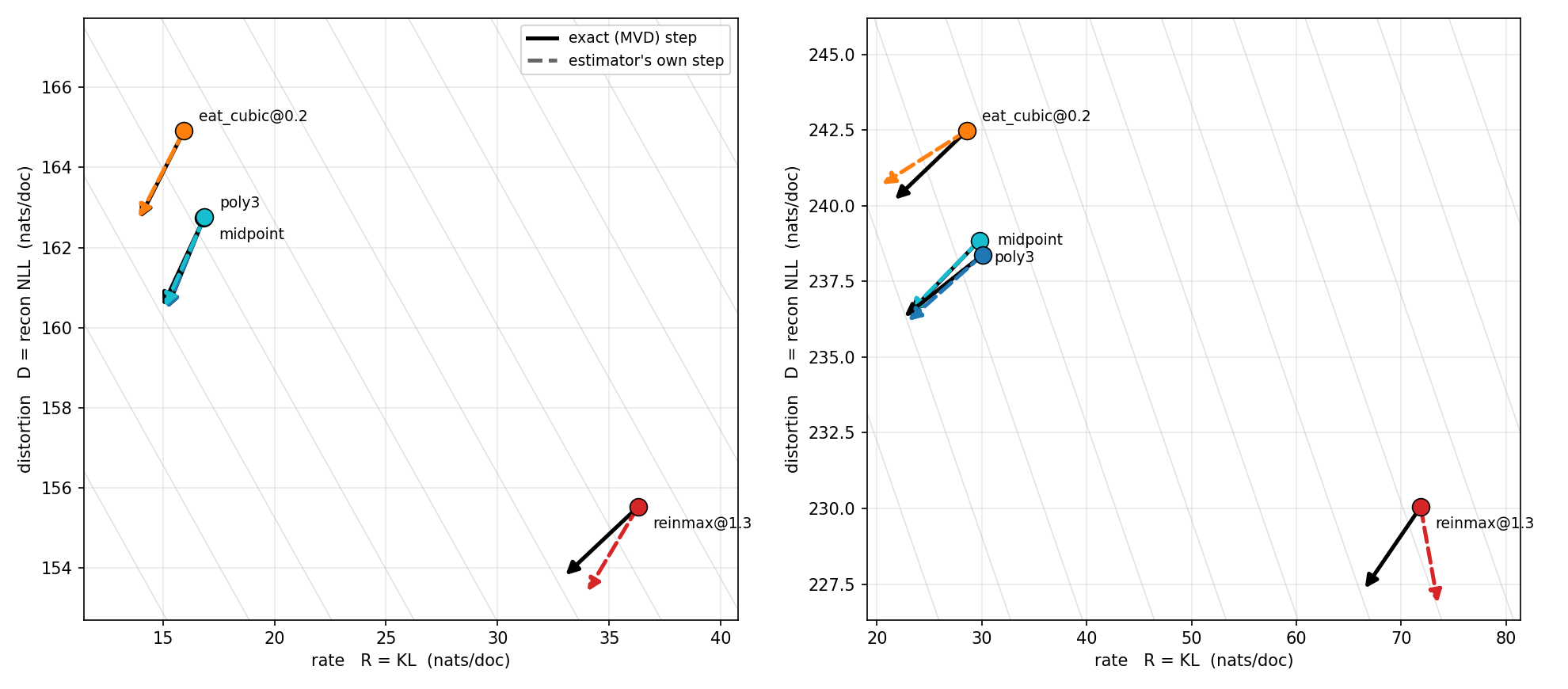}
    \caption{Local rate--distortion update directions in selected high-performing runs. Our estimators closely track the near-exact MVD reference; EAT-cubic is close on 20 Newsgroups but deviates more on RCV1, while ReinMax tilts toward lower distortion and higher rate.}
    \label{fig:reinmax-bias}
\end{figure}

\subsection{Timing protocol}
\label{app:timing-details}
Each value in Table~\ref{tab:speed-results} is the mean $\pm$ standard deviation over five timing repeats after warm-up.
Within a workload, the model, batch size, hardware, and training configuration are fixed across estimators.

\begin{itemize}
    \item \textbf{Linear Poisson VAE:} warm up for two epochs, record cumulative elapsed time at epochs 8 and 24, and divide the difference by 16.
    \item \textbf{Nonlinear Poisson VAE:} warm up for two epochs, record cumulative elapsed time at epochs 3 and 9, and divide the difference by 6.
    \item \textbf{Sparse Poisson DEF:} warm up for two epochs, record cumulative elapsed time at epochs 4 and 12, and divide the difference by 8.
\end{itemize}

We repeat this procedure five times for each estimator--workload--hardware combination.

\end{document}